\documentclass{article}

\usepackage{microtype}
\usepackage{graphicx}
\usepackage{subcaption}
\usepackage{booktabs} 

\usepackage{hyperref}

\usepackage[accepted]{icml2026}

\usepackage{amsmath}
\usepackage{amssymb}
\usepackage{mathtools}
\usepackage{amsthm}
\usepackage{atnguyen_icml}
\usepackage{enumitem}

\usepackage[capitalize,noabbrev]{cleveref}

\theoremstyle{plain}
\newtheorem{theorem}{Theorem}[section]

\newtheorem{innercustomthm}{Theorem}
\newenvironment{customthm}[1]
  {\renewcommand\theinnercustomthm{\textbf{#1}}\innercustomthm}
  {\endinnercustomthm}

\newtheorem{proposition}[theorem]{Proposition}
\newtheorem{lemma}[theorem]{Lemma}
\newtheorem{corollary}[theorem]{Corollary}
\theoremstyle{definition}
\newtheorem{definition}[theorem]{Definition}
\newtheorem{assumption}[theorem]{Assumption}
\newtheorem{example}{Example}
\theoremstyle{remark}
\newtheorem{remark}[theorem]{Remark}

\newcommand*{\RR}{\mathbb{R}}

\newcommand*{\NN}{\mathbb{N}}

\usepackage[textsize=tiny]{todonotes}

\icmltitlerunning{Provably Data-driven Multiple Hyper-parameter Tuning with Structured Loss Function}

\begin{document}

\twocolumn[
  \icmltitle{Provably Data-driven Multiple Hyper-parameter Tuning \\ with Structured Loss Function}



  \icmlsetsymbol{alpha}{*}

  \begin{icmlauthorlist}
    \icmlauthor{Tung Quoc Le}{grenoble}
    \icmlauthor{Anh Tuan Nguyen}{cmu}
    \icmlauthor{Viet Anh Nguyen}{cuhk}
  \end{icmlauthorlist}

  \icmlaffiliation{grenoble}{Université Grenoble Alpes, LJK, CNRS, Grenoble INP, 38000 Grenoble, France}
  \icmlaffiliation{cmu}{Carnegie Mellon University, Machine Learning Department}
  \icmlaffiliation{cuhk}{Chinese University of Hong Kong, Department of Systems Engineering and Engineering Management}

  \icmlcorrespondingauthor{Anh Tuan Nguyen}{atnguyen@cs.cmu.edu}

  \icmlkeywords{Machine Learning, ICML}

  \vskip 0.3in
]



\printAffiliationsAndNotice{}  

\begin{abstract}
    Data-driven algorithm design automates hyperparameter tuning, but its statistical foundations remain limited because model performance can depend on hyperparameters in implicit and highly non-smooth ways. Existing guarantees focus on the simple case of a one-dimensional (scalar) hyperparameter. This leaves the practically important, multi-dimensional hyperparameter tuning setting unresolved. We address this open question by establishing the first general framework for establishing generalization guarantees for tuning multi-dimensional hyperparameters in data-driven settings. Our approach strengthens the generalization guarantee framework for semi-algebraic function classes by exploiting tools from real algebraic geometry, yielding sharper, more broadly applicable guarantees. For completeness, we also instantiate the first lower bound for this general setting. We further extend the analysis to hyperparameter tuning using the validation loss under minimal assumptions, and derive improved bounds when additional structure is available. Finally, we demonstrate the scope of the framework with new learnability results, including data-driven weighted group lasso and weighted fused lasso.
\end{abstract}

\section{Introduction}
Hyperparameter tuning \citep{a2024hyperparameter} is a cornerstone of the modern machine learning paradigm, often determining a model's performance. Despite its critical role, this tuning process is often regarded more as an art than a rigorous scientific endeavor. In practice, the vast majority of practitioners rely on naive grid search, which discretizes continuous hyperparameter spaces and selects the candidate that yields the best empirical results. While straightforward, this exhaustive approach is theoretically unprincipled and offers no performance guarantees. 

To automate this tedious task, recent literature has introduced computational techniques such as Bayesian optimization and random search. While empirically effective, these methods lack robust theoretical foundations or rely on restrictive assumptions. For instance, Bayesian optimization~\citep{bergstra2011algorithms, snoek2012practical} generally assumes that a model's performance can be approximated as a noisy evaluation of a smooth, expensive function. In reality, this assumption could fail due to the volatile loss landscapes of complex models. Furthermore, these methods introduce their own complexity, requiring the selection of meta-hyperparameters, such as the acquisition function, kernel type, and bandwidth. Similarly, other approaches, including random search and spectral methods~\citep{bergstra2012random, hazan2018hyperparameter}, often provide theoretical guarantees only for discrete, finite grids. Meanwhile, advanced resource-allocation strategies like Hyperband~\citep{li2018hyperband}, which are effective at accelerating search via early stopping, primarily optimize computational efficiency for a fixed problem instance over discrete search spaces. Consequently, they do not characterize the fundamental theoretical complexity of identifying optimal continuous parameters. This gap highlights the pressing need for a rigorous theoretical foundation for hyperparameter tuning.

To address this gap, the \textit{data-driven algorithm design} framework models the hyperparameter tuning problem as a statistical learning problem over an unknown, application-specific distribution $\cD$ of problem instances~\citep{balcan2020data, gupta2020data}. From a finite set of training instances drawn from $\cD$, we seek to identify the hyperparameter configuration that is provably guaranteed to perform well on future, \emph{un}seen instances drawn from the same distribution. To this end, we define the loss function $\ell_\alpha(x)$ via a bi-level optimization structure: 
\begin{equation}
    \label{eq:hyper-parameters-tuning}
    \begin{aligned}
        &\ell_\alpha(x) = \inf_{\theta \in \cS(x, \alpha)} g(x, \alpha, \theta),\\
        \textup{where \,\,} & \cS(x, \alpha) \triangleq \arg \min_{\theta \in \Theta} f(x, \alpha, \theta).
    \end{aligned}
\end{equation}
Here, $f$ represents the \textit{training objective} (i.e, the optimization surrogate), while $g$ represents the \textit{validation objective} (i.e., the target metric). For instance, consider the linear regression problem in which we would like to tune the (one-dimensional) ridge regularization parameter. The problem instance can be represented as $x = (A, b, A', b')$ containing both the training set $(A, b)$ and the validation set $(A', b')$, which could be extracted from the available data. The training objective $f(x, \alpha, \theta)$ explicitly includes the regularization term $\|A\theta - b\|_2^2 + \alpha\|\theta\|_2^2$ to stabilize the solution, whereas the validation objective $g(x, \alpha, \theta) = \|A'\theta - b'\|_2^2$ evaluates performance with\emph{out} the penalization. In this context, the regularization parameter $\alpha$ shapes the optimization landscape of the training objective $f$, but affects the validation objective $g$ only \textit{implicitly} through the learned parameters. Analyzing this implicit dependency is the core challenge that we need to overcome.

The approach we take in this paper leverages statistical learning theory to bound the learning-theoretic complexity of the family of loss functions induced by the hyperparameters. Formally, it behooves us to analyze the learning-theoretic complexity (e.g., pseudo-dimension) of the function class $\cL = \{\ell_\alpha: \cX \rightarrow [-H, H] \mid \alpha \in \cA\}$, where $\cA$ is the hyperparameter space and $H$ is a positive-valued bound of the loss functions. The analysis in \citet{balcan2025sample} leverages the observation that for many problems of interest, given the problem instance $x$, $f_x(\alpha, \theta) \triangleq f(x, \alpha, \theta)$ admits a \textit{piecewise polynomial structure} with respect to $\alpha$ and $\theta$ (see Definition \ref{def:piecewise-poly-def} and Figure \ref{fig:piecwise-poly-structure} for a simple illustration). This structural assumption is ubiquitous: it has been established for a wide range of applications in both classical learning theory~\citep{bartlett1998almost,montufar2014number,bartlett2019nearly} as well as data-driven algorithm design problems~\citep{balcan2021much,balcan2023new,nguyen2026provably,cheng2025generalization}. 

On the downside, the result from~\citet{balcan2025sample} suffers from numerous limitations that hinder its general applicability. First, their analysis is strictly confined to one-dimensional hyperparameters (i.e., $\cA = \bbR$) based on ad hoc geometric arguments (e.g., oscillations, monotonic curves). In reality, many ML/AI pipelines rely on stacking multiple regularization terms to achieve the desired effects; the most popular is the elastic net, which combines L1 (lasso) and L2 (ridge) penalties with different hyperparameter weights. However, this geometric approach breaks down in this multi-dimensional setting ($\alpha \in \bbR^p$), where critical sets become high-dimensional manifolds rather than simple curves. Second, their results are limited to the simple setting where the training and evaluation objectives are identical (i.e., $f \equiv g$). For the ridge regression illustrated above, this requirement imposes that the training and validation sets are identical, i.e., $(A, b) = (A', b')$. This restriction violates the fundamental principles of model selection, which require the training and validation sets to be well separated to prevent overfitting and produce an unbiased estimate of the performance.
Third, their approach requires strong regularity assumptions on the boundaries of the piecewise structure (e.g., ELICQ conditions in \citet[Assumption 1]{balcan2025sample}). Finally, there is no lower bound in the general setting presented in the prior work, which makes it hard to assess the tightness of the upper bound. These limitations motivate the development of the general model-theoretic framework we present in this paper.

\paragraph{Contributions.} In this work, we provide an affirmative answer to the open question posed by \citet{balcan2025sample} by establishing the concrete sample complexity for the function class $\cL = \{\ell_\alpha: \cX \rightarrow [-H, H] | \alpha \in \cA\}$, where $\cA \subset \mathbb{R}^p$ with $p \ge 1$. Our contributions are as follows:
\begin{itemize}[leftmargin=*]
    \item We establish in Theorem~\ref{thm:upper-bound-pdim} a general tool that connects the learning-theoretic complexity of a function class $\cL$ to the logical complexity of its functions $\ell_\alpha$. Specifically, we show that if the loss function can be described by a polynomial \textit{first-order logic}, its pseudo-dimension is bounded by the complexity of the quantifier elimination process \citep{basu2006algorithms}.
    \item Second, we apply this framework to the training-loss setting (i.e., $f \equiv g$) studied by~\citet{balcan2025sample}. We show in Theorem~\ref{thm:pdim-tuning-training} that if the objective $f(x, \alpha, \theta)$ admits a piecewise polynomial structure for any given problem instance $x$, the induced loss $\ell_x(\alpha) = \min_{\theta \in \Theta}f(x, \alpha, \theta)$ is expressible as a polynomial FOL formula. This allows us to derive the first generalization guarantees for multi-dimensional hyperparameters ($\alpha \in \bbR^p$), effectively resolving the main open question in prior work.  Moreover, we also present the first pseudo-dimension lower-bound in Theorem \ref{thm:pdim-tuning-training-lower-bound}, showing that our upper-bound is tight in some scenarios.
    \item Third, we extend our analysis to the general bi-level validation tuning setting ($f \neq g$). We prove in Theorem~\ref{thm:pdim-tuning-validation} that validation loss functions are definable in FOL, ensuring learnability under minimal assumptions. 
    \item Forth, we show in Section~\ref{sec:refined-structure-improved-bound} how we can exploit the optimal solution path structure to improve the bounds. 
    \item Finally, we
    demonstrate in Section \ref{sec:applications} the versatility of our framework by establishing the first learnability guarantees for two new applications: data-driven tuning for weighted group LASSO and weighted fused LASSO, some of which go \emph{beyond the piecewise polynomial assumptions}.
\end{itemize}

\paragraph{Technical challenges.} The key challenge in our setting is that the loss function $\ell_\alpha(x)$ is defined implicitly via a bi-level optimization problem involving non-convex, non-smooth objectives. Consequently, the dependence of $\ell_\alpha(x)$ can be highly volatile, exhibiting discontinuities and potentially singularities.

To overcome these challenges, we adopt a \textit{model-theoretic} perspective, drawing on tools from real algebraic geometry. Instead of analyzing the local geometry of the solution paths (i.e., $\cS(x, \alpha) = \arg \min_{\theta \in \Theta}f(x, \alpha, \theta)$), we view the optimality conditions of the inner problem as a predicate in polynomial first-order logic (FOL). After that, we use \textit{quantifier-elimination} technique, transforming polynomial FOLs to a \textit{quantifier-free formula} (QFF), of which the computation can then be described as a Goldberg-Jerrum (GJ) algorithm \citep{goldberg1993bounding, bartlett2022generalization}. This allows us to use the GJ framework to establish the generalization guarantee for the function class of interest

\paragraph{Notations.} We denote $\sign(t) = 0$ if $t = 0$, $1$ if $t > 0$, and $-1$ if $t < 0$. For $z \in \bbR^k$, we denote $\bbR[z]$ the polynomial ring in $z$ over $\bbR$, i.e., $P(z) \in \bbR[z]$ is a (multi-variate) polynomial of $z$. We denote $\textup{deg}(P)$ the degree of the polynomial $P$. For a multiple inputs functions $h(z_1, z_2)$, given a fixed component, says $z_1$, we denote $h_{z_1}(z_2) \triangleq h(z_1, z_2)$ is an induced function where we treat $z_1$ as fixed and $z_2$ as the only variables.  

\subsection{Related Works}

\textbf{Data-driven hyperparameter tuning}, initiated by \citep{balcan2020data, gupta2020data}, frames hyperparameter tuning as a multi-task learning problem. This data-driven perspective has demonstrated significant empirical success across diverse fields, including low-rank approximation \citep{li2023learning, indyk2019learning}, accelerated linear system solvers \citep{sakaue2024generalization, nguyen2026provably}, and branch-and-cut strategies for (mixed integer) linear programming \citep{balcan2021sample}, among others.  

\paragraph{Theoretical analysis for data-driven hyperparameter tuning.} The practical success of data-driven hyperparameter tuning has motivated a line of theoretical work aimed at establishing rigorous generalization guarantees \citep{balcan2021much, balcan2021sample, bartlett2022generalization, balcan2023new, balcan2024algorithm}. Analyzing these problems is generally more challenging than standard statistical learning theory due to the volatile, non-smooth dependence of the loss $\ell_x(\alpha)$ on the hyperparameters $\alpha$. Most prior guarantees focus on settings where the algorithm's behavior is determined solely by $\alpha$, typically by exploiting the piecewise structure of $\ell_x(\alpha)$ w.r.t.~$\alpha$. Recently, \citet{balcan2025sample} provided the first \textit{general framework} for the harder case where $\ell_x(\alpha)$ involves an inner optimization over model parameter $\theta$. However, as discussed, their results are restricted to one-dimensional hyperparameters and single-level objectives ($f \equiv g$). Our work directly extends this line of inquiry.

\section{Preliminaries}
\subsection{Backgrounds on Learning Theory}
We recall the notion of \textup{pseudo-dimension}, a learning-theoretic complexity for a real-valued function class. 

\begin{definition}[Pseudo-dimension, \citealp{pollard1984convergence}] \label{def:pseudo-dimension}
    Consider a real-valued function class $\cL = \{\ell_{\alpha}: \cX \rightarrow \bbR \mid \alpha \in \cA\}$ parameterized by $\alpha \in \cA$. Given a set of inputs $S = (x_1, \dots, x_N) \subset \cX$, we say that $S$ is shattered by $\cL$ if there exists a set of real-valued threshold $t_1, \dots, t_N \in \bbR$ such that $\abs{\{(\sign(\ell_{\alpha}(x_1) - t_1), \dots, \sign(\ell_{\alpha}(x_N) - t_N)) \mid \ell_{\alpha} \in \cL\}} = 2^N$. The pseudo-dimension of $\cL$, denoted $\Pdim(\cL)$, is the maximum size $N$ of an input set that $\cL$ can shatter.
\end{definition}

A standard result in learning theory states that a real-valued function class with bounded pseudo-dimension is Probably Approximately Correct (PAC)-learnable with empirical risk minimization (ERM). 

\begin{theorem}[\citealp{pollard1984convergence}] \label{thm:pdim-to-generalization}
    Consider a real-valued function class $\cL = \{\ell_{\alpha}: \cX \rightarrow [-H, H] \mid \alpha \in \cA\}$ parameterized by $\alpha \in \cA$. Assume that $\Pdim(\cL)$ is finite. Then given $\epsilon > 0$ and $\delta \in (0, 1)$, for any $N \geq N(\epsilon, \delta)$, where $N(\epsilon, \delta) = \cO\left(\frac{H^2}{\epsilon^2}(\Pdim(\cL) + \log(1/\delta))\right)$, with probability at least $1 - \delta$ over the draw of $S = (x_1, \dots, x_N) \sim \cD^N$, where $\cD$ is a distribution over $\cX$, we have 
    $$
        \bbE_{x \sim \cD}[\ell_{\hat{\alpha}}(x)] \leq \inf_{\alpha \in \cA} \bbE_{x \sim \cD}[\ell_{\alpha}(x)] + \epsilon.
    $$
    Here $\hat{\alpha} \in \arg \min_{\alpha \in \cA} \sum_{x \in S} \ell_{\alpha}(x)$ is the ERM minimizer w.r.t.~the set of instances $S$.
\end{theorem}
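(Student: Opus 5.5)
This is the classical uniform-convergence argument. I will derive it in two stages: first a uniform deviation bound for $\cL$ controlled by $\Pdim(\cL)$, then the transfer of that bound to the ERM minimizer $\hat\alpha$.

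\emph{Step 1: symmetrization.} Write $\hat{\bbE}_S[\ell_\alpha] = \frac1N\sum_{i=1}^N \ell_\alpha(x_i)$. I will show that with probability at least $1-\delta$,
$$\sup_{\alpha\in\cA}\bigl|\bbE_{x\sim\cD}[\ell_\alpha(x)]-\hat{\bbE}_S[\ell_\alpha]\bigr|\le \epsilon/2.$$
Since every $\ell_\alpha$ takes values in $[-H,H]$, McDiarmid's inequality concentrates this supremum around its mean with deviation $O(H\sqrt{\log(1/\delta)/N})$. The standard ghost-sample symmetrization then bounds the mean by twice the Rademacher complexity $\mathfrak{R}_N(\cL)$.

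\emph{Step 2: Rademacher complexity via pseudo-dimension.} This is the main technical step. Let $d=\Pdim(\cL)$. I will bound the empirical $L_2$ covering numbers at scale $\eta$ by
$$\cN(\eta,\cL,L_2(S))\le (CH/\eta)^{O(d)},$$
using Pollard's or Haussler's packing bound. The argument is that a well-separated packing, after thresholding at randomly chosen levels, produces many distinct sign patterns. By Sauer--Shelah applied to the subgraph class, which has VC dimension $d$, the number of such patterns is at most polynomial in $N$ of degree $d$.

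Dudley's entropy integral then yields $\mathfrak{R}_N(\cL)=O\bigl(H\sqrt{d/N}\bigr)$. The cruder chaining, which incurs an extra $\log N$ factor, would suffice only up to log terms. To obtain exactly the stated $N(\epsilon,\delta)$ without logarithmic factors, I will use the full Dudley integral.

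\emph{Step 3: combine and transfer to ERM.} Combining the two steps gives a uniform deviation of $O\bigl(H\sqrt{(d+\log(1/\delta))/N}\bigr)$. This is at most $\epsilon/2$ once $N\ge C\frac{H^2}{\epsilon^2}(d+\log(1/\delta))$.

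On this event, fix any $\alpha\in\cA$. Then
$$\bbE[\ell_{\hat\alpha}]\le \hat{\bbE}_S[\ell_{\hat\alpha}]+\epsilon/2\le \hat{\bbE}_S[\ell_\alpha]+\epsilon/2\le \bbE[\ell_\alpha]+\epsilon.$$
The middle inequality holds because $\hat\alpha$ is an empirical minimizer. Taking the infimum over $\alpha$ gives the claim.

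If $\cA$ is such that the minimum is not attained, the same argument applies to an $\epsilon/4$-approximate minimizer. Measurability issues with the supremum are handled in the usual way, by working with the outer expectation.
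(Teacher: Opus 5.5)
The paper does not prove this theorem. It quotes it from Pollard (1984) and uses it as a black box, so there is no in-paper argument to compare against. Your outline is the standard modern derivation and is sound: bounded differences for the uniform deviation, symmetrization, Dudley's entropy integral fed with a packing bound in terms of $\Pdim(\cL)$, then the two-sided transfer to the empirical minimizer. It also does something the bare citation does not. A single-scale covering-number argument of the classical kind only gives
$N(\epsilon,\delta)=\cO\bigl(\frac{H^2}{\epsilon^2}(\Pdim(\cL)\log(H/\epsilon)+\log(1/\delta))\bigr)$.
The chaining step is what removes the $\log(H/\epsilon)$ and justifies the log-free form written in the statement.

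The one step to repair is your justification of the covering bound in Step~2. You say Sauer--Shelah gives a count ``polynomial in $N$ of degree $d$''. Read literally, that yields an $N$-dependent bound such as $\mathcal{N}(\eta,\cL,L_2(S))\le (cNH/(d\eta))^{d}$, for example by discretizing thresholds at the $N$ sample points. With such a bound, even the full Dudley integral gives $H\sqrt{d\log(N/d)/N}$, which brings back exactly the logarithm you want to avoid. So the log is not removed by Dudley versus cruder chaining alone. It is removed by Dudley \emph{combined with} a covering bound independent of $N$.

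In the Pollard/Haussler argument, Sauer--Shelah is applied not to the sample but to $k$ auxiliary points. These are drawn i.i.d.\ from the empirical measure on $S$, each paired with a threshold uniform on $[-H,H]$. Two functions at $L_1(S)$-distance at least $\eta$ are separated by a single draw with probability at least $\eta/(2H)$. A union bound over the $\binom{m}{2}$ pairs of an $m$-element packing then shows that $k\approx (4H/\eta)\log m$ draws realize $m$ distinct subgraph patterns. Hence $m\le (ek/d)^d$, which solves to $m\le (CH/\eta)^{O(d)}$ with no dependence on $N$. Haussler's refinement gives a bound of the form $e(d+1)(cH/\eta)^{d}$ directly, uniformly over probability measures.

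To pass to $L_2(S)$, use $\|f-g\|_{L_2(S)}^2\le 2H\|f-g\|_{L_1(S)}$, which only doubles the exponent. With that in place, $\int_0^{2H}\sqrt{d\log(CH/\eta)}\,d\eta=\cO(H\sqrt{d})$, and the rest of your argument goes through as written.
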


\subsection{First-order Formula and Quantifier Elimination}

In this section, we present the background necessary for quantifier elimination, a key component of our analysis. First, we introduce the notions of \textit{first-order formula}.

\begin{definition}[First-order formula, quantified/free variables, and polynomial first-order logic \cite{renegar1992computational}] \label{def:first-order-formula}

    A \textbf{first-order formula} (FOL) $\Phi(\alpha)$ admits the form:
    \begin{equation} \label{eq:FOL}
        (Q_1 \theta^{[1]} \in \bbR^{d_1}) \ldots (Q_K \theta^{[K]} \in \bbR^{d_K}) P(\alpha, \theta^{[1]}, \ldots, \theta^{[K]}),
    \end{equation}
    where
    \begin{enumerate}[leftmargin=*]
        \item Each $Q_k$ is one of the quantifiers $\exists$ or $\forall$. The sequence $\{Q_k\}_{k = 1}^K$ alternates between $\exists$ and $\forall$ and we denote $K$ as the number of quantifier alternations.
        
        \item $\theta^{[1]}, \theta^{[2]}, \ldots, \theta^{[K]}$ are called the quantified variables, while $\alpha \in \bbR^p$ is called the free variable.
        
        \item $P(\alpha, \theta^{[1]}, \theta^{[2]}, \ldots, \theta^{[K]})$ is a boolean combination of atomic predicates of the form:
        \begin{equation*}
            P_j(\alpha, \theta^{[1]}, \theta^{[2]}, \ldots, \theta^{[K]}) \, \chi_j \, 0,
        \end{equation*}
        where $\chi_j \in \{>, \geq, <, \leq, =, \neq\}$ is relational operator.
    \end{enumerate}
    A FOL $\Phi$ is a \emph{polynomial} FOL if each $P_j$ is a polynomial of $\alpha$ and $\theta^{[1]}, \dots, \theta^{[K]}$.
\end{definition}

Essentially, a polynomial FOL is a logical statement built from three simple components: polynomial (in)equalities, Boolean operators (e.g., AND and OR), and quantifiers (e.g., `there exists' $\exists$ and `for all' $\forall$). The variables linked internally with the quantifiers (in our application, the parameters $\theta$ we optimize over) are called \textit{quantified variables}, and the ones that are external inputs (in our application, the hyperparameters $\alpha$ we tune) are called \textit{free variables}. We now introduce the notion of \textit{quantifier-free formula}, which is an FOL that has no quantified variables. 

\begin{definition}[Quantifier-free formula]
    A polynomial FOL is a \textbf{quantifier-free formula} (QFF) if it has no quantified variables. In other words, it is a boolean combination of atomic predicates of the form $P_j(\alpha) \chi_j 0$, where $P_j$ is a polynomial of $\alpha \in \bbR^p$ and $\chi_j \in \{>, \geq, <, \leq, =, \neq\}$. 
\end{definition}

\begin{example}
We provide two examples of polynomial FOL for illustration purposes. Let $\alpha \in \bbR^p$ and $\theta \in \bbR^d$, then:
    \begin{itemize}[leftmargin=*]
        \item The formula $\Phi(\alpha) = (\forall \theta \in \bbR^p) (\sum_{i = 1}^p \alpha_i^2 - \sum_{j = 1}^d \theta_j^2 \geq 0)$ is a polynomial FOL, where $\theta$ are quantified variables and $\alpha$ are free variables. In this case, there is only one quantifier alternation (e.g., $\forall \theta \in \bbR^n$), and therefore the number of quantifier alternations of $\Phi$ is $K = 1$. Furthermore, there is only one atomic predicate $P(\alpha, \theta) \geq 0$, where $P(\alpha, \theta) = \sum_{i = 1}^p \alpha_i^2 - \sum_{j = 1}^d \theta_j^2$ and since $P$ is a polynomial of $\alpha$ and $\theta$, $\Phi$ is a polynomial FOL.
        
        \item The formula $\Phi'(\alpha) = (\sum_{i = 1}^p \alpha_i^2 - 1 \geq 0)$ is a quantifier-free polynomial FOL.
    \end{itemize}
\end{example}

Finally, the following result asserts that we can transform a polynomial FOL to a polynomial QFF with a bounded number of atomic predicates and a maximum polynomial degree of those atomic predicates. 
    
 \begin{theorem}[Quantifier elimination algorithm, {\citet[Algorithm~14.8]{basu2006algorithms}}]
    \label{thm:quantifier-elimination}
    Given a polynomial FOL $\Phi$ as in~\eqref{eq:FOL} with a fixed number of quantifier alternation $K$, there exists an equivalent polynomial QFF $\Psi$ consisting of (polynomial) atomic predicates in the free variables $\alpha \in \bbR^p$. Let $M$ be the number of atomic predicates in $\Phi$, and $\Delta$ be their maximum degree. Then the formula $\Psi(\alpha)$ satisfies the following structural complexities:
    \begin{enumerate}[leftmargin=*]
        \item Predicate complexity $I$: the number of distinct atomic polynomials appearing in $\Psi(\alpha)$ is at most 
        \[
            I \leq M^{\prod_{k = 1}^K (d_k + 1)} \cdot \Delta^{\cO(p) \cdot \prod_{k = 1}^K d_k}.
        \]

        \item Degree complexity $\Delta_{QE}$: the degree of these atomic polynomials is at most 
        \[
            \Delta_{QE} \leq \Delta^{\cO(\prod_{k = 1}^K d_k)}.
        \]
    \end{enumerate} 
\end{theorem}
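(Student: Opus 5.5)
This statement is the quantifier-elimination theorem of \citet[Algorithm~14.8]{basu2006algorithms}, so the plan is to reconstruct its proof through the block-elimination machinery of that reference rather than to derive anything new. Let $\mathcal{P}$ be the family of $M$ polynomials of degree at most $\Delta$ appearing in $\Phi$. The first step is structural. Elimination proceeds block by block, from the innermost quantifier $Q_K\theta^{[K]}$ outward. The key object at each stage is a finite family of polynomials in the remaining variables whose sign conditions determine the truth value of the formula. Concretely, we treat $(\alpha,\theta^{[1]},\dots,\theta^{[K-1]})$ as parameters and apply the parametrized \emph{block elimination} procedure (Algorithm~14.6 in the reference) to $\mathcal{P}$ with respect to the block $\theta^{[K]}\in\bbR^{d_K}$. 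This yields a family $\mathcal{P}_{K-1}$ of polynomials in the outer variables with the following property: on each realizable sign condition of $\mathcal{P}_{K-1}$, the set of sign conditions of $\mathcal{P}$ realized over the fibre in $\theta^{[K]}$ is constant. Consequently $(Q_K\theta^{[K]})P$ becomes a Boolean combination of sign conditions on $\mathcal{P}_{K-1}$.

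The second step is the quantitative bound at one stage. Block elimination builds $\mathcal{P}_{K-1}$ from subresultant and critical-point computations. The critical-point method for sampling every sign condition of $s$ polynomials of degree $\Delta$ in $d$ variables produces univariate representations of degree $\Delta^{\cO(d)}$. These come from at most $s^{d+1}\Delta^{\cO(d)}$ choices of subsets of at most $d+1$ polynomials, each perturbed by infinitesimals. Eliminating the infinitesimals and taking subresultant coefficients of the resulting Thom encodings, with the parameters kept as free polynomial variables, gives
\[
|\mathcal{P}_{K-1}| \le M^{d_K+1}\Delta^{\cO(d_K)\cdot(\text{\#parameters})}
\quad\text{and}\quad
\deg \le \Delta^{\cO(d_K)}.
\]

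The third step is to iterate. At stage $k$ we feed in $M_k = |\mathcal{P}_k|$ polynomials of degree $\Delta_k$ and eliminate $\theta^{[k]}$, which gives $M_{k-1}\le M_k^{d_k+1}\Delta_k^{\cO(d_k)(\cdot)}$ and $\Delta_{k-1}\le \Delta_k^{\cO(d_k)}$. Unrolling the degree recursion gives $\Delta_{QE}\le \Delta^{\cO(\prod_k d_k)}$ directly. Unrolling the count recursion gives $M^{\prod_k(d_k+1)}$ times a power of $\Delta$.

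The main obstacle is controlling the $\Delta$-exponent so that it is $\cO(p)\prod_k d_k$, with the dimension of the free variables entering only once, rather than growing with all the intermediate parameter dimensions. Here the plan is to follow the reference's refined analysis. All the blocks are first handled simultaneously as a single parametrized computation. The final projection to $\alpha$ is then made only at the end, so the factor $\Delta^{\cO(p)}$ is paid once, while the inner exponents only multiply the degrees. Once $\mathcal{P}_0\subset\bbR[\alpha]$ is obtained, we write $\Psi(\alpha)$ as the disjunction, over the realizable sign conditions of $\mathcal{P}_0$ that make $\Phi$ true, of the conjunctions of the corresponding atomic predicates. Its atomic polynomials are exactly $\mathcal{P}_0$, which gives both bounds $I$ and $\Delta_{QE}$. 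Since the statement is quoted from the reference, in the paper it suffices to cite Theorem~14.16 there; the argument above is the outline of that proof.
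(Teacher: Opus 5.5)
Citing \citet{basu2006algorithms} is all the paper does, so your closing move matches it. The outline you present as ``the outline of that proof'', however, goes wrong exactly at the step you call the main obstacle.

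\textbf{Where the outline fails.} Your per-stage bound is $|\mathcal{P}_{K-1}|\le M^{d_K+1}\Delta^{O(d_K)\cdot(\text{number of parameters})}$. Iterated honestly, it does not give the statement. When $\theta^{[k]}$ is eliminated, the parameters are $(\alpha,\theta^{[1]},\dots,\theta^{[k-1]})$. So already for $K=2$, the $\Delta$-exponent of $|\mathcal{P}_0|$ contains $O(d_2)(p+d_1)(d_1+1)$. That is a $d_1^2d_2$ term, which is not $O(p)\,d_1d_2$ when $d_1\gg p$.

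Your proposed repair does not fix this, under either reading:
\begin{itemize}
\item If ``handled simultaneously'' means eliminating the blocks together, that is impossible for alternating blocks. The truth of $\forall\theta^{[1]}\exists\theta^{[2]}$ is not determined by the sign conditions realized in the joint $(\theta^{[1]},\theta^{[2]})$-fibre.
\item If it means keeping $\alpha$ as a parameter throughout, that is already what block-by-block elimination does. It leaves the problem untouched, because $\theta^{[1]},\dots,\theta^{[k-1]}$ are still parameters at the inner stages.
\end{itemize}

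\textbf{The missing fact.} Take a family of $M'$ polynomials of degree at most $\Delta'$ and a block of $d'$ variables. The block-elimination family has at most $M'^{\,d'+1}\Delta'^{\,O(d')}$ elements, of degree $\Delta'^{\,O(d')}$, \emph{independently of how many parameter variables there are}. The parameters only affect the arithmetic cost. This independence is exactly what makes the reference's general-decision bound of the form $M^{\prod_k(d_k+1)}\Delta^{O(d_K)\cdots O(d_1)}$ possible.

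With the correct recursion $M_{k-1}\le M_k^{d_k+1}\Delta_k^{O(d_k)}$ and $\Delta_{k-1}\le \Delta_k^{O(d_k)}$, unrolling gives
\[
|\mathcal{P}_0|\le M^{\prod_k(d_k+1)}\Delta^{O(d_K)\cdots O(d_1)} .
\]
For fixed $K$ this is $M^{\prod_k(d_k+1)}\Delta^{O(\prod_k d_k)}$, with no dependence on $p$ at all. Your last step (the atoms of $\Psi$ are exactly $\mathcal{P}_0$) then gives the stated bound $M^{\prod_k(d_k+1)}\Delta^{O(p)\prod_k d_k}$ a fortiori. There is no ``pay $\Delta^{O(p)}$ once'' issue to overcome.

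The dimension $p$ enters only the number of disjuncts of $\Psi$ (the sign conditions of $\mathcal{P}_0$ realized in $\mathbb{R}^p$) and the running time. This matches Theorem~14.16 of the reference, where the number of free variables appears only in the bound on the number of disjuncts and in the complexity. The sizes of the conjunctions and the degrees there do not depend on it.
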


Later, we will see that the computation of a polynomial QFF can be described by a GJ algorithm, which is the key idea of our analysis.

\section{Problem Settings}

We follow the setting from \citet{balcan2025sample} by considering a data-driven framework involving three fundamental spaces: (1) a problem instance space $\cX \subseteq \RR^q$, (2) a hyperparameter space $\cA \subseteq \bbR^p$, and (3) a model parameter space $\Theta \subseteq \bbR^d$. For simplicity, we assume $\cA = [\alpha_{\min}, \alpha_{\max}]^p$ and $\Theta = [\theta_{\min}, \theta_{\max}]^d$. For any problem instance $x \in \cX$, we evaluate the performance of a model parameterized by $\theta \in \Theta$ and hyperparameters $\alpha \in \cA$ using two distinct objective functions:
\begin{itemize}[leftmargin=*]
    \item \textit{Training objective} $f: \cX \times \cA \times \Theta \rightarrow [-H, H]$: the surrogate loss minimized by the algorithm, and
    \item \textit{Validation objective} $g: \cX \times \cA \times \Theta \rightarrow [-H, H]$: the target metric used to evaluate the quality of the solution.
\end{itemize}
Here, $H$ is a threshold for boundedness. The induced loss $\ell_\alpha(x)$ measures the performance of the model on the problem instance $x$ given hyperparameters $\alpha$, defined via the bi-level structure in~\eqref{eq:hyper-parameters-tuning}. Notice that we adopt the \textit{optimistic bi-level optimization} formulation \citep{dempe2002foundations}, which is standard in the literature to ensure the validation is well-defined, then the lower-level set $\cS(x, \alpha)$ is not a singleton.

\textbf{Objectives.} We assume there is an application-specific, unknown problem distribution $\cD$ over $\cX$. The goal of data-driven hyperparameter tuning is to find a value $\alpha^*$ that minimizes the expected loss 
\[
\alpha^* \in \arg \min_{\alpha \in \cA}\bbE_{x \sim \cD} [\ell_\alpha(x)].
\]
Since $\cD$ is unknown, we observe $N$ training instances $S = \{x_1, \dots, x_N\} \sim \cD^N$ and minimize the empirical risk 
\[
\hat{\alpha}(S) \in \arg \min_{\alpha \in \cA} \frac{1}{N} \sum_{i = 1}^N \ell_\alpha(x_i).
\]
Our primary theoretical goal is to establish sample complexity guarantees for this learning problem. Exploiting results Theorem~\ref{thm:pdim-to-generalization}, it suffices to study the induced function class $\cL = \{\ell_\alpha: \cX \rightarrow [-H, H] \mid \alpha \in \cA\}$, and bound its pseudo-dimension $\Pdim(\cL)$. 

\begin{figure}
    \centering

    \begin{tikzpicture}
      \fill[red!10] (-3.5,-2.0) rectangle (4.0,2.5);
      \node at (-2.5, 2) {$f_{(1)}(\boldsymbol{z})$};
    
      \fill[blue!20] (0,0) circle (1.5cm);
      \node at (0,0) {$f_{(-1)}(\boldsymbol{z})$};
    
      \draw[line width=0.4mm] (0,0) circle (1.5cm);
      
      \draw[<-, thick] (1.5, 0) -- (2.5, 0) node[right] {$f_{(0)}(\boldsymbol{z})$};
    
    \end{tikzpicture}
    
    \caption{
        A simple illustration of the piecewise polynomial structure. Here, the set of boundary polynomials $\mathbb{H} = \{h_1\}$, where $h_1(z) = z_1^2 + z_2^2 - 4$. In the region $\{z \in \bbR^2 \mid h_1(z) > 0\}$ (i.e, the region outside the circle, the sign pattern $\boldsymbol{\sigma}(z) = (1) \in \{-1, 0, 1\}^1$), the function $f(z)$ admits the polynomial form $f_{(-1)}(z) = z_1 - z_2$. Similarly, in the region $\{z \in \bbR^2 \mid h_1(z) < 0\}$ inside the circle (i.e., the sign pattern $\boldsymbol{\sigma}(z) = (-1)$), we have $f(z) = f_{(-1)}(z) = z_1 + z_2$. Finally, in the region where $\{\boldsymbol{z} \in \bbR^2 \mid h_1(z) = 0\}$ (i.e., the boundary of the circle, $\boldsymbol{\sigma}(z) = (0)$), $f(z) = f_{(0)}(z) = z_1^3$. Note that all pieces are polynomials in $z$, and the complexity of the function $f(z)$ is $(1, 3, \Delta)$, where $\Delta = \max\{\textup{deg}(h_1), \textup{deg}(f_{(0)}), \textup{deg}(f_{(-1)}), \textup{deg}(f_{(1)})\} = 3$.
    }
    \label{fig:piecwise-poly-structure}
\end{figure}
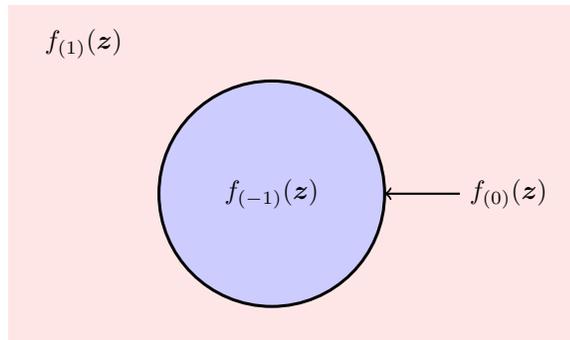

\textbf{Structural assumption.} We focus on the case where $f$ and $g$ admits \textit{piecewise polynomial structure}, formally defined as follows. 
\begin{definition}[Piecewise polynomial function {\cite{balcan2025sample}}] 
	\label{def:piecewise-poly-def}
	A real-valued function $f$ admits a \textit{piecewise polynomial structure} with complexity $(M_f, T_f, \Delta_f)$ if there exists a set of boundary polynomials $\mathbb{H} = \{h_1, \dots, h_{M_f}\}$ and a set of value polynomials $\mathbb{F} = \{f_{\boldsymbol{\sigma}}\}_{\boldsymbol{\sigma} \in \Sigma_f} \subset \bbR[z]$, where $\Sigma_f \subseteq \{-1, 0, 1\}^{M_f}$ and $|\Sigma_f| \leq T_f$, both with polynomial degrees at most $\Delta$, such that for any $z$, we have $f(z) = f_{\boldsymbol{\sigma}(z)}(z)$, where $\boldsymbol{\sigma}(z) \in \Sigma_{f}$ is the sign pattern of $z$ with respect to the set of functions $\mathbb{H}$, i.e., $\boldsymbol{\sigma}(z)_j = \sign(h_{j}(z))$. The functions in $\mathbb{F}$ are called the \textit{piece functions} and the functions in $\mathbb{H}$ are called the \textit{boundary functions}.
\end{definition}

\begin{assumption}[Structural assumption] \label{asmp:structural}
    Given any problem instance $x$, the dual parameter-dependent training objective $f_{x}(\alpha, \theta) \triangleq f(x, \alpha, \theta)$ and dual parameter-dependent validation objective $g_{x}(\alpha, \theta) \triangleq g(x, \alpha, \theta)$ admits piecewise polynomial structure (as functions of $\alpha$ and $\theta$) with complexity  $(M_f, T_f, \Delta_f)$ and $(M_g, T_g, \Delta_g)$ (independent of the problem instance $x$), respectively. {Moreover, the set $\cS(x,\alpha)$ of minimizes of $f(x,\alpha,\cdot)$ is non-empty, for all $(x,\alpha)$.}
\end{assumption}

\begin{remark}[Ubiquity of structure]
    As noted by \citet{balcan2025sample}, this structural assumption is robust and ubiquitous. It has been established for a wide range of applications in both classical learning theory \citep{bartlett1998almost,montufar2014number,bartlett2019nearly} and data-driven algorithm design problems \citep{balcan2021much,balcan2023new,nguyen2026provably,cheng2025generalization}. 
\end{remark}

While \citet{balcan2025sample} initiated the formal study of this setting, their results are restricted to one-dimensional hyperparameter case ($\cA \subset \bbR$) and the single-level setting ($f \equiv g$). We now present our general results, which fully eliminate these two restrictions.

\section{A General Learning-theoretic Complexity Framework via First-Order Logic} \label{sec:learning-framework-via-FOL}
In this section, we present a general result that bounds the pseudo-dimension of a function class $\cF = \{f_\alpha: \cX \rightarrow [-H, H] \mid \alpha \in \cA \subseteq \bbR^p\}$. The next theorem serves as the engine of our analysis, converting logical definability into statistical learnability. Its proof is presented in Appendix~\ref{apx:learning-framework-via-FOL}.

\begin{theorem}[Pseudo-dimension bound] \label{thm:upper-bound-pdim}
	Consider a class of functions $\cF = \{f_\alpha: \cX \subseteq \bbR^q \to \bbR \mid \alpha \in \cA\}$, where $\cA = [\alpha_\textup{min}, \alpha_\textup{max}]^p \subset \bbR^p$. Suppose that for any fixed instance $x \in \cX$ and threshold $t \in \bbR$, the boolean value $\mathbb{I}(f_\alpha(x) \geq t)$ is equivalent to a polynomial FOL $\Phi_{x, t}(\alpha)$ (cf.~Definition \ref{def:first-order-formula}) with a fixed quantifier alternation $K \in \NN$. Assume that for any problem instance $x$ and threshold $t$: 
    \begin{itemize}[leftmargin=*]
        \item $\Phi_{x, t}$ involves $K$ blocks of quantifiers with variable dimensions $(d_1, \dots, d_K)$,
        \item The number of polynomials appearing in $\Phi_{x,t}$ and their maximum degree are uniformly bounded by $M$ and $\Delta$,
    \end{itemize}    
    Then the pseudo-dimension $\Pdim(\cF)$ is upper-bounded by
    \begin{equation*}
		O\left( p\prod_{k = 1}^K (d_k + 1) \log M + p^2 \prod_{k = 1}^K d_k \log \Delta\right).
	\end{equation*}
\end{theorem}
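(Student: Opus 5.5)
The proof eliminates quantifiers and then counts sign patterns, in the spirit of the Goldberg--Jerrum / Warren argument. Suppose $\cF$ shatters $x_1,\dots,x_N$ with witnesses $t_1,\dots,t_N$. Then, as $\alpha$ ranges over $\cA$, the vector $(\mathbb{I}(f_\alpha(x_i)\ge t_i))_{i=1}^N$ takes all $2^N$ values. Using $\mathbb{I}(f_\alpha(x)\ge t)$ rather than the sign with the zero case is harmless: shattering in Definition \ref{def:pseudo-dimension} forces all above/below patterns, up to the standard convention.

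\textbf{Step 1: eliminate quantifiers.} By hypothesis, each indicator equals $\Phi_{x_i,t_i}(\alpha)$, a polynomial FOL with blocks $(d_1,\dots,d_K)$, at most $M$ atomic polynomials, and degree at most $\Delta$. Applying Theorem \ref{thm:quantifier-elimination} to each $i$ gives an equivalent QFF $\Psi_i(\alpha)$. It is a Boolean combination of at most $I\le M^{\prod_k(d_k+1)}\Delta^{\cO(p)\prod_k d_k}$ polynomials in $\alpha\in\bbR^p$, each of degree at most $\Delta_{QE}\le \Delta^{\cO(\prod_k d_k)}$. The truth value of $\Psi_i(\alpha)$ is a function only of the sign vector of its $I$ polynomials at $\alpha$.

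\textbf{Step 2: count sign patterns.} Gather all $NI$ polynomials from $\Psi_1,\dots,\Psi_N$. The vector of truth values $(\Psi_i(\alpha))_i$ is a function of the joint sign pattern in $\{-1,0,1\}^{NI}$ of these polynomials at $\alpha$. The number of achievable patterns is therefore at most the number of realizable sign conditions. By Warren's theorem, in the version of Milnor--Thom / \citet{basu2006algorithms} that includes zeros, $m$ polynomials of degree $D$ in $p$ variables realize at most $(8eDm/p)^p$ sign conditions, or $\cO(mD)^p$. Restricting $\alpha$ to $\cA$ only decreases the count. Shattering then requires
\[
2^N \le \left(\frac{8e\,\Delta_{QE}\, N I}{p}\right)^p .
\]

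\textbf{Step 3: solve for $N$.} Taking logarithms gives $N \le p\log_2(8eN) + p\log I + p\log \Delta_{QE}$. The standard lemma ($N\le a\log N + b \Rightarrow N = \cO(a\log a + b)$) yields $N=\cO(p\log p + p\log I + p\log\Delta_{QE})$. Substituting the bounds,
\[
p\log I \le p\textstyle\prod_k(d_k+1)\log M + \cO(p^2)\prod_k d_k\log\Delta,
\]
and $p\log\Delta_{QE}=\cO(p\prod_k d_k\log\Delta)$, which is absorbed into the previous term. The $p\log p$ term is also absorbed whenever $M\ge 2$ or $\Delta\ge2$; the degenerate cases are trivial. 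This gives the claimed bound.

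The main obstacle is Step 2. The QE output of Theorem \ref{thm:quantifier-elimination} bounds the number of \emph{distinct} atomic polynomials. I must make sure that a single family of at most $I$ polynomials per instance suffices, so that the Boolean structure, however large, does not enter the count. I must also use a sign-condition bound valid with zero signs, since the predicates include $=$ and $\neq$. If only a strict-sign Warren bound is available, I would first perturb each polynomial $P$ into $P\pm\epsilon$, a standard trick that doubles $m$, and then apply Warren. This changes only constants.
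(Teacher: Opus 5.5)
Your proposal is correct and is essentially the paper's proof. The paper likewise applies Theorem~\ref{thm:quantifier-elimination} to obtain a QFF with at most $I$ polynomials of degree at most $\Delta_{QE}$. It then encodes the evaluation of that QFF as a GJ algorithm with degree $\Delta_{QE}$ and predicate complexity $I$, and invokes Theorem~\ref{thm:gj}, whose proof is exactly the Warren-type sign-condition count you carry out by hand. If you keep the factor $1/p$ from $(8e\Delta_{QE}NI/p)^p$ when taking logarithms, you get $N/p \le \log_2(8e\Delta_{QE}I) + \log_2(N/p)$, so the $p\log p$ term you worried about never appears.
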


\paragraph{Comparison to existing works.} To the best of our knowledge, \citet{goldberg1993bounding} is the only work in the literature studying the pseudo-dimension of the polynomial FOLs. \citet{goldberg1993bounding} gave a bound \footnote{Comparing to original statement, the bound reported in \eqref{eq:existing-bounds} is missing a factor $2^{O(K)}$. This is because when $K$ is fixed, this constant is absorbed in the big O notation.}
\begin{equation}
    \label{eq:existing-bounds}
    O\big(p(p + q) \prod_{k = 1}^K d_k \left(\log M + \log\Delta\right)\big),
\end{equation}
and our \Cref{thm:upper-bound-pdim} is finer than this bound.
In particular, we remove the dependency on the data dimension $q$ and a factor $p$ in front of $\log M$. This improvement is significant, for example, when the function class contains piecewise polynomial functions with exponential pieces/boundaries, or when the data dimension is larger than the number of tuning parameters, $q \gg p$.

\Cref{thm:upper-bound-pdim} applies to many practical learning situations. The only condition we need to verify is that the objective functions $f$ and $g$ can be described as polynomials and/or polynomial FOLs. The class of functions that admit description with polynomials is precisely the set of \emph{semi-algebraic} functions. We provide interested readers with more details about semi-algebraic functions in Appendix~\ref{appendix:semi-algebraic}. In fact, many functions used in the learning context are semi-algebraic because this class does not only contain piecewise polynomial functions (cf.~Definition~\ref{def:piecewise-poly-def}), but also many others such as $\|x\|_p, p \in \NN, p \geq 2$ or well-known sparse regularisations such as $\ell_2/\ell_1$ ratio, group LASSO, which are not instances of Definition~\ref{def:piecewise-poly-def} (also see Appendix~\ref{appendix:semi-algebraic} for more details). Therefore, the scope of \Cref{thm:upper-bound-pdim} can go up to the set of semi-algebraic functions. 

In \Cref{sec:tuning-training} and \Cref{sec:tuning-validation}, we demonstrate applications of \Cref{thm:upper-bound-pdim} for $f$ and $g$ being piecewise polynomial functions and address an open question posed in \cite{balcan2025sample}.
In addition, we also demonstrate an example of a learning problem whose functions are semi-algebraic but not covered by \Cref{sec:tuning-training} and \Cref{sec:tuning-validation}: hyper-parameter tuning for the weighted group LASSO in \Cref{sec:weighted-group-lasso}.

\section{Data-driven Tuning via Training Objective} \label{sec:tuning-training}
\subsection{Upper bound}
In this section, we first analyze the setting where the hyperparameter $\alpha$ is tuned to minimize the training objective directly (i.e., the case where $f \equiv g$). In this scenario, for a fixed problem instance $x$ and hyperparameter $\alpha$, the loss $\ell_\alpha(x)$ is defined implicitly as the optimal value of the training problem:
\[
    \ell_\alpha(x) = \min_{\theta \in \Theta} f(x, \alpha, \theta).
\]
Recall that from Assumption \ref{asmp:structural}, for any given problem instance $x$, $f_x(\alpha, \theta) \triangleq f(x, \alpha, \theta)$ admits piecewise polynomial structure (Definition \ref{def:piecewise-poly-def}) with complexity $(M_f, T_f, \Delta_f)$. Under such an assumption, the following result establishes a pseudo-dimension upper-bound for the function class $\cL$ induced by $\ell_\alpha: \cX \rightarrow [-H, H]$ when varying $\alpha \in \cA = [\alpha_{\textup{min}}, \alpha_\textup{max}]^p \subset \bbR^p$.
\begin{theorem}[Pseudo-dimension -- Training loss] \label{thm:pdim-tuning-training}
    Let $\cA = [\alpha_\textup{min}, \alpha_\textup{max}]^p$ and $\Theta = [\theta_\textup{min}, \theta_\textup{max}]^d$. Suppose that for any instance $x$, the training objective $f_x(\alpha, \theta)\triangleq f(x, \alpha, \theta)$ for $(\alpha, \theta) \in \cA \times \Theta$ admits a piecewise polynomial structure (cf.~Definition \ref{def:piecewise-poly-def}) with complexity $(M_f, T_f, \Delta_f)$. Then, the pseudo-dimension of the function class $\cL = \{\ell_\alpha: \cX \rightarrow [-H, H] \mid \alpha \in \cA\}$, where $\ell_\alpha(x) = \min_{\theta \in \Theta} f(x, \alpha, \theta)$, is bounded by
    \[
        \Pdim(\cL) = \cO(pd\log(M_f + T_f + d) + p^2 d \log \Delta_f).
    \]
\end{theorem}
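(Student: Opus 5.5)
The plan is to apply Theorem~\ref{thm:upper-bound-pdim}. For each fixed instance $x$ and threshold $t$, I will write the predicate $\mathbb{I}(\ell_\alpha(x) \geq t)$ as a polynomial FOL in the free variable $\alpha$, using a single block of quantified variables of dimension $O(d)$. I will then count its atomic polynomials and bound their degree.

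Since $\cS(x,\alpha)$ is nonempty, the minimum is attained. Hence
\[
\ell_\alpha(x) \geq t \iff (\forall \theta \in \bbR^d)\,\big(\theta \notin \Theta \ \vee\ f_x(\alpha,\theta) - t \geq 0\big).
\]
Membership $\theta \in \Theta$ is the conjunction of the $2d$ linear predicates $\theta_i - \theta_{\min} \geq 0$ and $\theta_{\max} - \theta_i \geq 0$. To express $f_x(\alpha,\theta) - t \geq 0$ without quantifiers, I use the piecewise structure of $f_x$ with boundary set $\mathbb{H} = \{h_1,\dots,h_{M_f}\}$ and piece set $\{f_{\boldsymbol\sigma}\}_{\boldsymbol\sigma\in\Sigma_f}$:
\[
\bigvee_{\boldsymbol\sigma \in \Sigma_f} \Big( \bigwedge_{j=1}^{M_f} \sign(h_j(\alpha,\theta)) = \sigma_j \ \wedge\ f_{\boldsymbol\sigma}(\alpha,\theta) - t \geq 0 \Big).
\]
Each condition $\sign(h_j)=\sigma_j$ is a single atomic predicate on $h_j$ with relation $>$, $=$ or $<$. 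The sign patterns of $\mathbb{H}$ partition the domain, so exactly one disjunct applies at each point, and this disjunction correctly encodes $f_x \geq t$. All of this is a boolean combination of polynomial atomic predicates.

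The formula has $K=1$ quantifier block with $d_1 = d$. The distinct polynomials in it are the $h_j$, the $f_{\boldsymbol\sigma} - t$, and the $2d$ box polynomials. So $M \leq M_f + T_f + 2d$ and $\Delta \leq \max(\Delta_f,1)$. All of these bounds are uniform in $x$ and $t$. Theorem~\ref{thm:upper-bound-pdim} then gives
\[
\Pdim(\cL) = O\big(p(d+1)\log(M_f+T_f+2d) + p^2 d\log\Delta_f\big) = O\big(pd\log(M_f+T_f+d) + p^2 d\log \Delta_f\big).
\]
If $\Delta_f = 1$, the last term should be read with $\log\max(\Delta_f,2)$.

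Two points need care. First, the hyperparameter domain $\alpha\in\cA$ does not need to be encoded, because Theorem~\ref{thm:upper-bound-pdim} already restricts $\alpha$ to $\cA$. Second, and this is the main obstacle, the count must be of distinct polynomials rather than atomic occurrences. Each $h_j$ appears up to $T_f$ times in the disjunction, but the quantifier-elimination complexity in Theorem~\ref{thm:quantifier-elimination} depends only on the number of distinct polynomials. This is what keeps the bound at $\log(M_f+T_f+d)$ rather than $\log(M_fT_f)$. If that theorem is instead read as counting occurrences, the bound only grows to $\log(M_fT_f + d)$, which is within a constant factor of $\log(M_f+T_f+d)$, so the stated rate survives either way.
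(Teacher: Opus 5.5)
Your proposal is correct and follows the paper's proof essentially step for step. It uses the same one-block formula $(\forall\theta\in\bbR^d)[\theta\notin\Theta \lor f_x(\alpha,\theta)\ge t]$, the same sign-pattern disjunction for $f_x\ge t$, and the same count of $M_f+T_f+2d$ polynomials of degree at most $\Delta_f$, fed into Theorem~\ref{thm:upper-bound-pdim} with $K=1$, $d_1=d$. One small point: attainment of the minimum is not actually needed, since $\inf_{\theta\in\Theta} f_x(\alpha,\theta)\ge t$ holds exactly when $f_x(\alpha,\theta)\ge t$ for every $\theta\in\Theta$.
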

\begin{proof}
    To apply Theorem \ref{thm:upper-bound-pdim}, given a problem instance $x$ and a real-valued threshold $t$, our goal is to construct a polynomial FOL formula $\Phi_{x, t}(\alpha)$ equivalent to $\mathbb{I}(\ell_\alpha(x) \geq t)$. Since $\ell_\alpha(x) = \min_{\theta \in \Theta} f_x(\alpha, \theta)$ is a minimization over $\Theta$, the condition $\ell_\alpha(x) \geq t$ is equivalent to stating that for all parameter $\theta \in \Theta$, the function value $f_x(\alpha, \theta)$ is greater or equal than $t$, and $\Phi_{x, t}(\alpha)$ is defined as
    \begin{equation*}
        \begin{aligned}
            \Phi_{x, t}(\alpha) &\triangleq (\forall \theta \in \bbR^d)[(\theta \in \Theta) \Rightarrow f_x(\alpha, \theta) \geq t] \\
            &= (\forall \theta \in \bbR^d)[\neg (\theta \in \Theta) \lor (f_x(\alpha, \theta) \geq t)].
        \end{aligned}
    \end{equation*}
    Here, we use the logical identity $(A \Rightarrow B) =  (\neg A \lor B)$ (i.e., not $A$ or $B$). It is now sufficient to apply \Cref{thm:upper-bound-pdim}. Details are in Appendix~\ref{appendix:tuning-training}.
\end{proof}

Combining Theorem~\ref{thm:pdim-tuning-training} with Theorem~\ref{thm:pdim-to-generalization}, we could establish the sample complexity guarantee for multi-dimensional hyperparameter tuning using the training loss function. Our result generalizes the guarantees of \citet{balcan2025sample} from one-dimensional hyperparameters to the multi-dimensional case, where $p > 1$. Moreover, our logic-based analysis avoids the need for strong regularity assumptions on the boundary polynomials $\{h_{x, i}\}_{i = 1, \dots, M_f}$ (see e.g., \citet[Assumption 1]{balcan2025sample}), which were required by the geometric approach to prevent topological pathologies in the solution path.

    \subsection{Lower bound}
    In this section, we will instantiate the \textit{first} general lower bound for the pseudo-dimension in such a setting. The proof idea originates from the \textit{bit-extraction technique} \citep{bartlett2019nearly}, but requires a \textit{novel stabilization argument} to handle the implicit optimization problem in the definition of the function class. 

    \begin{theorem}[Pseudo-dimension lower bound - Training loss] \label{thm:pdim-tuning-training-lower-bound}
        Let $\cA = \bbR^p$, $\Theta = \bbR^d$. Then for every sufficiently large $\Delta_f > 0$, there exists a function class $\mathcal{L} = \{\ell_\alpha: \cX \rightarrow \bbR \mid \alpha \in \cA\}$, where $\ell_\alpha(x) = \min_{\theta \in \Theta} f(x, \alpha, \theta)$, and $f(x, \alpha, \theta)$ is a degree at most $\Delta_f$ for any problem instance $x \in \cX$, such that $\Pdim(\cL) = \Omega(pd\log \Delta_f)$.
    \end{theorem}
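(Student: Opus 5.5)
The plan is to combine a composition trick, which lets an inner minimization over $\theta\in\bbR^d$ emulate a polynomial of degree $\Delta_f^{d}$ in each hyperparameter coordinate, with the bit-extraction technique of \citet{bartlett2019nearly}. Bit extraction turns a polynomial with $\approx \Delta_f^d$ sign changes into $\log(\Delta_f^d)=d\log\Delta_f$ shattered points per coordinate. Repeating this independently across the $p$ coordinates of $\alpha$ then gives $\Omega(pd\log\Delta_f)$ shattered points in total.

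\emph{Step 1 (one coordinate, no optimization).} Let $m=\lfloor \Delta_f/4\rfloor$ (say) and $D=m^d$. By the Chebyshev composition identity, the Chebyshev polynomial $T_D$ equals the $d$-fold composite $T_m\circ\cdots\circ T_m$. On $[-1,1]$ it has $D$ monotone oscillations between $\pm1$, with extreme points $a_k=\cos(k\pi/D)$. For $n=\lfloor\log_2 D\rfloor$ and each bit position $i\le n$, I will build a polynomial $B_i(y)$ of degree $O(\Delta_f)$ such that $B_i(T_D(\alpha_1))$ has sign equal to bit $i$ of $k$ at a chosen point $\alpha^{(k)}$ near $a_k$, for every $k<2^n$. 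The simplest way to do this is to apply bit extraction directly: pick $2^n$ points $\alpha^{(k)}$ and, for each bit $i$, a polynomial in $\alpha_1$ realizing the bit pattern. Such a polynomial must itself be expressible as a composition of $d$ degree-$O(\Delta_f)$ maps. My first attempt is to realize it as $T_{m}\circ\cdots$ with per-level offsets $\theta_{k+1}=T_m(\theta_k)+c_{i,k}$, where the offsets $c_{i,k}$ are part of the instance $x$. If this fails, I will fall back on the standard bit-extraction-by-composition argument: each level reads off $\log m$ bits via a degree-$m$ polynomial that is piecewise close to a sawtooth.

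\emph{Step 2 (emulating composition through $\min_\theta$).} For an instance $x=(j,i,\text{offsets},\lambda)$ I will set
\[
f(x,\alpha,\theta)=\lambda\Big[(\theta_1-q_1(\alpha_j))^2+\sum_{k=1}^{d-1}(\theta_{k+1}-q_{k+1}(\theta_k))^2\Big]+B_i(\theta_d)+\epsilon\,\theta_d^{2},
\]
where the $q_k$ have degree $\le m$. This makes $\deg f\le 2m+\deg B_i\le\Delta_f$. The instance index $j$ selects which coordinate $\alpha_j$ is read, giving $p$ independent blocks of $\approx d\log\Delta_f$ points each. The witnesses are products: the value $\alpha^{(k_1,\dots,k_p)}$ has coordinate $\alpha_j=\alpha^{(k_j)}$. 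The small term $\epsilon\theta_d^2$ (or replacing $B_i$ by a bounded-below surrogate) makes $f$ coercive, so the minimum over $\Theta=\bbR^d$ is attained. This is consistent with Assumption~\ref{asmp:structural}.

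\emph{Step 3 (stabilization, the main obstacle).} The minimum does not equal $B_i(\text{composite}(\alpha_j))$ exactly, because the term $B_i$ can pull $\theta$ off the constraint manifold. I will show that, for fixed $\alpha$, the minimum converges to $B_i(\Theta^\star(\alpha_j))+\epsilon(\cdot)$ as $\lambda\to\infty$, where $\Theta^\star$ is the exact composite. The upper bound on the minimum comes from evaluating $f$ on the constraint manifold. The lower bound comes from coercivity: near-minimizers lie in a compact set, the penalty forces them within $O(\lambda^{-1/2})$ of the manifold, and $B_i$ is uniformly continuous there. The key observation is that shattering needs only finitely many witnesses $\alpha$, namely $2^{N}$ of them. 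Pointwise convergence therefore suffices, and one large $\lambda$ (encoded in $x$) works for all of them simultaneously.

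I then choose the thresholds $t=\epsilon$-shifted zero, using a margin $|B_i(\cdot)|\ge c>0$ at the witnesses. Bit extraction can always be arranged to leave such a margin. With this choice, $\sign(\ell_\alpha(x)-t)$ reproduces every bit pattern. This yields $\Pdim(\cL)\ge p\lfloor d\log_2 m\rfloor=\Omega(pd\log\Delta_f)$ once $\Delta_f$ is large enough.
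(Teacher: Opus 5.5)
Your Steps 2--3 are sound in outline: the quadratic penalty chain forcing $\theta_1=q_1(\alpha_j)$ and $\theta_{k+1}=q_{k+1}(\theta_k)$, one large $\lambda$ justified pointwise on finitely many witnesses, and product witnesses across the $p$ coordinates. One repair is needed there. The term $\epsilon\theta_d^2$ does not give coercivity: perturbing only $\theta_d$ by $s$ costs $\lambda s^2$, while $B_i(\theta_d)$ has degree $\deg B_i>2$ in $s$. The outer polynomial itself must therefore be bounded below, as your parenthetical allows.

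The genuine gap is Step 1. This is where the factor $d$ has to come from, and it is not carried out. The version you actually state cannot work. If all instances of block $j$ share the inner value $z=T_D(\alpha_j)$ and differ only in an outer polynomial of degree $O(\Delta_f)$, then $n$ such polynomials cut the $z$-line into $O(n\Delta_f)$ intervals. Hence at most $O(n\Delta_f)$ sign vectors occur, giving $n=O(\log\Delta_f)$ with no $d$. Concretely, near $a_k$ you have $T_D\approx(-1)^k$, which encodes only the parity of $k$. Your offset variant keeps a fold in every link, and you give no argument that the leading bits survive it. Your fallback also fails if ``sawtooth'' means the shift $y\mapsto my \bmod 1$. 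No degree-$m$ polynomial can follow that map: by Markov's inequality its slope is $O(m^2)$ times its size on the interval, whereas the witnesses straddling each jump are exponentially close.

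The missing idea is that folding destroys high-order information, so the instance must control how many folds precede the read-out. Within your architecture this can be done as follows.
\begin{itemize}
\item Take $m=2^r\le\Delta_f/2$ and witnesses $\alpha_j=\cos(\pi\beta)$, with $\beta=0.b_1b_2\cdots b_{n+1}1$ in binary and $b_1=0$.
\item Since $T_{2^t}(\cos\pi\beta)=\cos(2^t\pi\beta)$ is positive iff $b_t=b_{t+1}$, the sign vector over $t=1,\dots,n$ is the XOR image of $(b_2,\dots,b_{n+1})$. This map is a bijection, and the margin on this finite set is positive.
\item Write $t=rs+c$ with $0\le s\le d$ and $1\le c\le r$, so that $T_{2^t}=T_{2^c}\circ T_m^{\circ s}$.
\item Instance $(j,s,c)$ takes $q_1=\cdots=q_s=T_m$, sets the remaining links to the identity, and uses the outer polynomial $T_{2^c}$.
\item $T_{2^c}$ has even degree and is $\ge -1$, so $f\ge\lambda\cdot(\text{penalty})-1$ is coercive, $\deg f\le 2m\le\Delta_f$, and your Step 3 applies with threshold $0$.
\end{itemize}
This shatters $p\,r(d+1)=\Omega(pd\log\Delta_f)$ points.

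Completed this way, your proof would be a genuinely different route from the paper's, which uses no dynamics. There, the $d$ coordinates of $\theta$ are the $d$ base-$K$ digits of $\alpha_j$. They are pinned by the grid penalty $C\sum_m\prod_{k<K}(\theta_m-k)^2$ and the linear selector $(\alpha_j-\sum_m\theta_mK^{m-1})^2$. A Lagrange-interpolation polynomial $E_b(\theta_i)$ then reads bit $b$ of digit $i$. So the inner minimization performs a digit search rather than an iterated composition, and its large $C$ plays the role of your large $\lambda$.
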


    \proof 
        It suffices to show that there exists $N = \Omega(pd\log \Delta_f)$ problem instances $x_1, \dots, x_N$ and $N$ real-valued thresholds $\tau_1, \dots, \tau_N \in \bbR$ such that for any bit vector $y \in \{0, 1\}^N$ there exists a hyperparameter $\alpha_y \in \cA$ such that $\bbI(\ell_{\alpha_y}(x_t) - \tau_t \geq 0) = y_t$ for any $t = 1, \dots, N$. We will construct the problem instances $x_t$ and the parameterized function class $\cL = \{\ell_{\alpha}: \cX \rightarrow \bbR \mid \alpha \in \cA\}$ as follows:
        \begin{itemize}[leftmargin=*]
            \item The construction of $x_t$: Let $K = \lfloor \Delta_f / 2 \rfloor$, $B = \lfloor \log_2 K\rfloor$. Let $N = p \cdot d \cdot B$, then it is obvious that $N = \Omega(pd \log \Delta_f)$. For the triplet $(j, i, b)$, where $j \in \{1, \dots, p\}$, $i \in \{1, \dots, d\}$, and $b \in \{1, \dots, B\}$, we define the problem instance $x^{(j, i, b)}$ as the tuple of one hot vectors $(u, v, w) \in \{0, 1\}^{p \times d \times B}$. Here $u_j = v_i = w_b = 1$, and all other entries are 0.  

            \item The construction of $\ell_\alpha(x):$ Since $\ell_\alpha(x) = \min_{\theta \in \Theta} f(x, \alpha, \theta)$, it suffices to construct $f(x, \alpha, \theta)$, which is
            \begin{equation*}
                \begin{aligned}
                     f(x, \alpha, \theta)  &= C \sum_{m=1}^d \prod_{k=0}^{K-1} (\theta_m - k)^2 \\
                     &+ \left( \sum_{n=1}^p u_n \alpha_n - \sum_{m=1}^d \theta_m K^{m-1} \right)^2 \\
                     &+ 0.5 \sum_{m=1}^d \sum_{c=1}^B v_m w_c E_c(\theta_m).
                \end{aligned}
            \end{equation*}
            Here $E_c(t) =\sum_{j=0}^{K-1} \beta_{j, c} \left( \prod_{\substack{m=0 \\ m \neq j}}^{K-1} \frac{t - m}{j - m} \right)$ is the \textit{bit-extracting polynomial}, and $C$ is a sufficiently large constant that is independent on $y, j, i, b$.
        \end{itemize}
        Based on that construction, we can claim that:
        \begin{enumerate}[leftmargin=*]
            \item $\ell^{\cK}(\alpha_{y})(x^{(j, i, b)}) = \frac{y^{(j, i, b)}}{2}$ for any $y \in \{0, 1\}^{p \times d \times B}$, for some $\alpha_y \in \cA$ depending on $y$, and $\ell^{\cK}(\alpha)(x) = \min_{\theta \in \cK^d} f(x, \alpha, \theta)$, $\cK = \{0, 1, \dots, K - 1\}$. 

            \item $\ell_{\alpha_y}(x^{(j, i, b)}) \in (\ell^{\cK}_{\alpha_y}(x^{(j, i, b)}) - 0.1, \ell^{\cK}_{\alpha_y}(x^{(j, i, b)})$.
        \end{enumerate}
        Finally, simply choose $\tau_t = 0.25$, and we have the final conclusion. The details are provided in Appendix \ref{apx:lower-bound-proof}.
    \qed

\section{Data-driven Tuning via Validation Objective} \label{sec:tuning-validation}
We now address the general data-driven setting in which the hyperparameter $\alpha$ is tuned to minimize a validation objective $g$ evaluated on the optimal training parameters. As noted earlier, this formulation is general and can capture standard practices in hyperparameter tuning (e.g., tuning a regularization coefficient to minimize validation loss in LASSO). In this scenario, for a fixed problem instance $x$ and hyperparameter $\alpha$, the loss $\ell_\alpha(x)$ is defined as $\ell_\alpha(x) = \inf_{\theta \in \cS(x, \alpha)} g_x(\alpha, \theta)$, where $\cS(x, \alpha) = \arg \min_{\theta \in \Theta}f_x(\alpha, \theta)$.

Recall from Assumption \ref{asmp:structural} that for any fixed problem instance $x$, $f_x(\alpha, \theta)$ and $g_x(\alpha, \theta)$ admits piecewise polynomial structure with complexity $(M_f, T_f, \Delta_f)$ and $(M_g, T_g, \Delta_g)$, respectively. Under this assumption, we obtain the following result, establishing the learning guarantee for data-driven hyperparameter tuning with a validation objective. 

\begin{theorem}[Pseudo-dimension -- Validation loss] \label{thm:pdim-tuning-validation}
    Let $\cA = [\alpha_\textup{min}, \alpha_\textup{max}]^p$ and $\Theta = [\theta_\textup{min}, \theta_\textup{max}]^d$. Suppose that for any instance $x$, the training objective $f_x(\alpha, \theta)$ and the validation objective $g_x(\alpha, \theta)$, for $(\alpha, \theta) \in \cA \times \Theta$, admit piecewise polynomial structures (cf.~Definition~\ref{def:piecewise-poly-def}) with complexity $(M_f, T_f, \Delta_f)$ and $(M_g, T_g, \Delta_g)$, respectively. Then, the pseudo-dimension of the function class $\cL=  \{\ell_\alpha: \cX \rightarrow [-H, H] \mid \alpha \in \cA\}$, where $\ell_\alpha(x) = \inf_{\theta \in \cS(x, \alpha)}g_x(\alpha, \theta)$ and $\cS(x, \alpha) = \arg \min_{\theta \in \Theta}f_x(\alpha, \theta)$, is bounded by
    \[
        \Pdim(\cL) = \cO(pd^2\log M_\textup{tot} + p^2d^2 \log \Delta_\textup{tot}),
    \]
    where $M_\textup{tot} = M_f + T_f + M_g + T_g + d$, and $\Delta_\textup{tot} = \max(\Delta_f, \Delta_g)$.
\end{theorem}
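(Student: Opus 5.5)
The plan is to apply Theorem~\ref{thm:upper-bound-pdim}, as in the proof of Theorem~\ref{thm:pdim-tuning-training}. This time the formula $\Phi_{x,t}(\alpha)$ has two quantifier blocks, each over $\bbR^d$, and that is where the factors $d^2$ (equivalently $(d+1)^2$) in the bound come from.

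Fix an instance $x$ and a threshold $t$. Since $\ell_\alpha(x)$ is an infimum over $\cS(x,\alpha)$, which is nonempty by Assumption~\ref{asmp:structural}, we have $\ell_\alpha(x)\ge t$ if and only if every $\theta\in\cS(x,\alpha)$ satisfies $g_x(\alpha,\theta)\ge t$. Membership $\theta\in\cS(x,\alpha)$ is itself a universal statement:
\[
\theta\in\Theta \;\wedge\; (\forall \theta'\in\bbR^d)\big[\neg(\theta'\in\Theta)\vee f_x(\alpha,\theta)\le f_x(\alpha,\theta')\big].
\]
Substituting this into the implication $\theta\in\cS(x,\alpha)\Rightarrow g_x(\alpha,\theta)\ge t$ and pushing the negation inward gives the prenex form
\[
\Phi_{x,t}(\alpha)=(\forall\theta\in\bbR^d)(\exists\theta'\in\bbR^d)\Big[\neg(\theta\in\Theta)\vee\big(\theta'\in\Theta\wedge f_x(\alpha,\theta)>f_x(\alpha,\theta')\big)\vee g_x(\alpha,\theta)\ge t\Big].
\]
This formula has $K=2$ quantifier blocks with $d_1=d_2=d$. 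The box constraints $\theta\in\Theta$ and $\theta'\in\Theta$ contribute $O(d)$ linear atoms.

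The next step is to write the piecewise-polynomial predicates as quantifier-free Boolean combinations of polynomial atoms.
\begin{itemize}
\item For $g_x(\alpha,\theta)\ge t$: take the disjunction over $\boldsymbol\sigma\in\Sigma_g$ of the conjunction of the sign conditions $\sign(h^g_j(\alpha,\theta))=\sigma_j$ for all $j$, together with $g_{\boldsymbol\sigma}(\alpha,\theta)-t\ge 0$. This uses $M_g+T_g$ polynomials of degree at most $\Delta_g$.
\item For $f_x(\alpha,\theta)>f_x(\alpha,\theta')$: take the disjunction over pairs $(\boldsymbol\sigma,\boldsymbol\sigma')\in\Sigma_f^2$ of the sign conditions for $\theta$ under $\boldsymbol\sigma$, the sign conditions for $\theta'$ under $\boldsymbol\sigma'$, and the atom $f_{\boldsymbol\sigma}(\alpha,\theta)-f_{\boldsymbol\sigma'}(\alpha,\theta')>0$. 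This uses $2M_f$ boundary polynomials and at most $T_f^2$ difference polynomials, all of degree at most $\Delta_f$.
\end{itemize}

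Counting everything, the number of atomic polynomials is $M=O(M_f+T_f^2+M_g+T_g+d)\le O(M_\textup{tot}^2)$, and the maximum degree is $\Delta=\Delta_\textup{tot}$. Plugging these into Theorem~\ref{thm:upper-bound-pdim} gives
\[
O\big(p(d+1)^2\log M+p^2d^2\log\Delta\big)=O\big(pd^2\log M_\textup{tot}+p^2d^2\log\Delta_\textup{tot}\big),
\]
because $\log(M_\textup{tot}^2)=2\log M_\textup{tot}$.

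The step I expect to be the main obstacle is the bookkeeping in the second step. Two things need care. First, the pairing of sign patterns of $\theta$ and $\theta'$ produces $T_f^2$ distinct difference polynomials; this is harmless only because the count enters the bound through a logarithm. Second, the formula must be genuinely in prenex form with exactly two alternating blocks. The bound depends on the product of the block dimensions, so I will not collapse $(\theta,\theta')$ into a single block of dimension $2d$. That would not even be a valid formula here, since the quantifiers are $\forall$ followed by $\exists$ and cannot be merged. I also need to check that the optimistic infimum is handled correctly: nonemptiness of $\cS(x,\alpha)$ ensures the universal statement over minimizers is not vacuous, and the equivalence with $\ell_\alpha(x)\ge t$ holds whether or not the infimum is attained.
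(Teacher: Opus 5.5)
Your proposal is correct and follows essentially the same route as the paper. Both use the same $(\forall\theta)(\exists\theta')$ prenex formula with $K=2$ and $d_1=d_2=d$, the same atom count ($\cO(d)$ box atoms, $M_g+T_g$ validation atoms, $2M_f$ boundary atoms plus $T_f^2$ difference atoms) and degree $\Delta_\textup{tot}$, all fed into Theorem~\ref{thm:upper-bound-pdim}; your remark that $\log(M_\textup{tot}^2)=2\log M_\textup{tot}$ absorbs the $T_f^2$ term makes explicit a step the paper leaves implicit when it passes from its count $\cO(d+M_g+T_g+M_f+T_f^2)$ to the stated $M_\textup{tot}$.
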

\begin{proof}
    Similar to the proof of Theorem \ref{thm:pdim-tuning-training}, the idea is to use Theorem \ref{thm:upper-bound-pdim} by showing that: given a problem instance $x$ and a real-valued threshold $t$, there is a polynomial FOL $\Phi_{x, t}(\alpha)$ equivalent to $\mathbb{I}(\ell_x(\alpha) \geq t)$ with bounded complexities. Such a polynomial FOL can be defined as follows:
    \begin{equation*}
        \begin{aligned}
            \Phi_{x,t}(\alpha) &\triangleq (\forall \theta \in \mathbb{R}^d) \left[ (\theta \in \mathcal{S}(x, \alpha)) \Rightarrow (g(x, \alpha, \theta) \ge t)\right] \\
            &= (\forall \theta \in \mathbb{R}^d) \left[ \neg (\theta \in \mathcal{S}(x, \alpha)) \lor (g(x, \alpha, \theta) \ge t)\right].
        \end{aligned}
    \end{equation*}
    Here, we again use the identity $(A \Rightarrow B) = (\neg A \lor B)$. We first expand the optimality constraints $\theta \in \cS(x, \alpha)$. Note that a parameter $\theta$ is not optimal (i.e., $\neg (\theta \in \cS(x, \alpha))$ if it is not in the region $\Theta$ or if there exists a better candidate $\theta'$ such that $f_x(\alpha, \theta') < f_x(\alpha, \theta)$. Therefore, $\neg (\theta \in \cS(x, \alpha))$ can be rewritten as
    \[
        (\theta \not \in \Theta) \lor \left[(\exists \theta' \in \bbR^d)[(\theta' \in \Theta) \land (f_x(\alpha, \theta') < f_x(\alpha, \theta))]\right].
    \]
    Let $L_1 = (\theta' \in \Theta) \land (f_x(\alpha, \theta') < f_x(\alpha, \theta))$, which is the logical sentence for \textit{optimality check}, we can then write $\Phi_{x, t}(\alpha)$ as 
    \[
        (\forall \theta \in \bbR^d)(\exists \theta' \in \bbR^d)\left[\underbrace{\theta \not \in \Theta}_{\textup{Domain check}} \lor \underbrace{(g_x(\alpha, \theta) \geq t)}_{\textup{Validation check}} \lor L_1\right].
    \]
\end{proof}

Combining Theorem~\ref{thm:pdim-tuning-validation} with Theorem~\ref{thm:pdim-to-generalization}, we could establish the sample complexity guarantee for multi-dimensional hyperparameter tuning using the validation loss function. 

    \subsection{Handling approximate lower-level optimization}
    Definition of $\cS(x, \alpha) = \arg \min_{\theta \in \Theta}f_x(\alpha, \theta)$ involves in the \textit{exact minimization}, which might be too strict in practice. In this section, we demonstrate that our proposed framework can flexibly handle the case of \textit{inner approximate minimization}, where $\theta$ is in the $\epsilon$-approximate optimal set $S_\epsilon(x, \alpha) = \{\theta \in \Theta \mid f(x, \alpha, \theta) \leq \inf_{\theta' \in \Theta}f(x, \alpha, \theta') + \epsilon\}$. We define the loss as $\ell_{\alpha}^\epsilon(x) = \inf_{\theta \in \cS_\epsilon(x, \alpha)} g(x, \alpha, \theta)$. Under such notions, we can establish the pseudo-dimension upper-bound as follows.
    \begin{proposition}
        Under the same assumptions as Theorem \ref{thm:pdim-tuning-training}, for every fixed $\epsilon > 0$, define the function class $\cL_\epsilon = \{\ell^\epsilon_\alpha: \cX \rightarrow [-H, H] \mid \alpha \in \cA\}$. Then $\Pdim(\cL_\epsilon) = \cO(pd^2\log M_\textup{tot} + p^2d^2 \log \Delta_\textup{tot})$.
    \end{proposition}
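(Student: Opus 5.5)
We follow the template of Theorem~\ref{thm:pdim-tuning-validation} and apply Theorem~\ref{thm:upper-bound-pdim}. The assumptions are read as those of Theorem~\ref{thm:pdim-tuning-validation}, since the bound involves $g$, $M_\textup{tot}$ and $\Delta_\textup{tot}$. For a fixed instance $x$, threshold $t$ and tolerance $\epsilon>0$, we build a polynomial FOL $\Phi^\epsilon_{x,t}(\alpha)$ equivalent to $\mathbb{I}(\ell^\epsilon_\alpha(x)\ge t)$. Since $\ell^\epsilon_\alpha(x)$ is an infimum over $\cS_\epsilon(x,\alpha)$, we have $\ell^\epsilon_\alpha(x)\ge t$ iff every $\theta\in\cS_\epsilon(x,\alpha)$ satisfies $g_x(\alpha,\theta)\ge t$. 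So we set
\[
\Phi^\epsilon_{x,t}(\alpha)\triangleq(\forall\theta\in\bbR^d)\big[\neg(\theta\in\cS_\epsilon(x,\alpha))\lor(g_x(\alpha,\theta)\ge t)\big].
\]

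The key step is to express $\neg(\theta\in\cS_\epsilon(x,\alpha))$ with a single existential block, as was done for exact optimality. We have $\theta\notin\cS_\epsilon(x,\alpha)$ iff either $\theta\notin\Theta$, or $f_x(\alpha,\theta)>\inf_{\theta'\in\Theta}f_x(\alpha,\theta')+\epsilon$. The second condition holds iff there is some $\theta'\in\Theta$ with $f_x(\alpha,\theta')+\epsilon<f_x(\alpha,\theta)$. The forward direction uses the definition of the infimum with the strict gap. The backward direction is immediate. This gives
\[
\Phi^\epsilon_{x,t}(\alpha)\equiv(\forall\theta)(\exists\theta')\big[\theta\notin\Theta\lor g_x(\alpha,\theta)\ge t\lor\big((\theta'\in\Theta)\land(f_x(\alpha,\theta')+\epsilon<f_x(\alpha,\theta))\big)\big].
\]
This has the same shape as the formula in the proof of Theorem~\ref{thm:pdim-tuning-validation}. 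The only change is that the comparison polynomial is shifted by the constant $\epsilon$, which leaves degrees unchanged.

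Next we turn the piecewise atoms into polynomial atoms. The statement $f_x(\alpha,\theta')+\epsilon<f_x(\alpha,\theta)$ expands as a disjunction over pairs of sign patterns $\boldsymbol\sigma,\boldsymbol\sigma'\in\Sigma_f$. Each disjunct is a conjunction of the sign conditions on $h_j(\alpha,\theta)$ and $h_j(\alpha,\theta')$ together with $f_{\boldsymbol\sigma'}(\alpha,\theta')+\epsilon-f_{\boldsymbol\sigma}(\alpha,\theta)<0$. The condition $g_x\ge t$ expands the same way over $\Sigma_g$. The box constraints contribute $2d$ linear atoms per variable block. This part needs bookkeeping.

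We count \emph{distinct} polynomials, not atoms in the Boolean expansion. The boundary polynomials give $O(M_f+M_g)$ polynomials. The piece comparisons give at most $T_f^2$ polynomials of the form $f_{\boldsymbol\sigma'}(\alpha,\theta')+\epsilon-f_{\boldsymbol\sigma}(\alpha,\theta)$. The validation pieces give $T_g$ polynomials $g_{\boldsymbol\sigma}(\alpha,\theta)-t$. The box constraints give $O(d)$ more. So the total is $M=\mathrm{poly}(M_\textup{tot})$, hence $\log M=O(\log M_\textup{tot})$, and the maximum degree is $\Delta_\textup{tot}$. The formula has $K=2$ blocks with $d_1=d_2=d$. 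Theorem~\ref{thm:upper-bound-pdim} then gives $O(p(d+1)^2\log M+p^2d^2\log\Delta_\textup{tot})$, which matches the stated bound.

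The only delicate step is checking that the strict-inequality witness correctly characterizes $\epsilon$-suboptimality, so that no closure or attainment issues arise. This holds because the infimum is over the compact set $\Theta$ and a strict gap always admits a witness. The constant $\epsilon$ being fixed is what keeps the formula polynomial in $(\alpha,\theta,\theta')$ alone.
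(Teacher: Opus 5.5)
Your proposal is correct and follows the paper's proof essentially verbatim: the same $\forall\theta\,\exists\theta'$ formula with the $\epsilon$-shifted comparison $f_x(\alpha,\theta')+\epsilon<f_x(\alpha,\theta)$, then Theorem~\ref{thm:upper-bound-pdim} with $K=2$ and $d_1=d_2=d$. Your distinct-polynomial count, and your reading of the hypotheses as those of Theorem~\ref{thm:pdim-tuning-validation} (which a bound involving $g$, $M_\textup{tot}$ and $\Delta_\textup{tot}$ requires), fill in details the paper leaves implicit; one small correction is that the strict-gap witness follows from the definition of the infimum alone, so compactness of $\Theta$ plays no role there (and could not give attainment anyway, since $f_x$ need not be continuous).
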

    \proof
        The key idea is that
        \[
            \ell^{\epsilon}_\alpha(x) \geq t \Leftrightarrow (\forall \theta \in S_{\epsilon}(x, \alpha))[g(x, \alpha, \theta) \geq t],
        \]
        or equivalently
        \[
            (\forall \theta \in \bbR^d) [\theta \in S_\epsilon(x, \alpha) \Rightarrow g(x, \alpha, \theta) \geq t].
        \]
        After expanding the approximate optimality, the threshold predicate can be written as
        \begin{equation*}
            \begin{aligned}
                 &(\forall \theta \in \bbR^d)(\exists \theta' \in \bbR^d)\\
                 &[(\theta \not \in \Theta) \lor (g(x, \alpha, \theta) \geq t) \\
                 &\lor (\theta' \in \Theta \land f(x, \alpha, \theta) > f(x, \alpha, \theta') + \epsilon)].
            \end{aligned}
        \end{equation*}
        Finally, applying Theorem \ref{thm:upper-bound-pdim}, we have the conclusion.
    \qed

\section{Refined Sample Complexity Bounds via Explicit Solution Paths} \label{sec:refined-structure-improved-bound}

The general framework established in Sections \ref{sec:tuning-training} and \ref{sec:tuning-validation} provides general guarantees for implicitly defined loss functions. However, in many practical settings, such as LASSO and ridge regression \citep{tibshirani1996regression, hoerl1970ridge}, the inner optimization problem is not a black box. Instead, it admits an explicit analytical structure, typically as a unique \textit{solution path} $\theta^*(x, \alpha)$ that admits a piecewise structure to $\alpha$. In this section, we demonstrate that our model-theoretic perspective can exploit this additional structure to derive significantly tighter bounds. By directly analyzing the complexity of the solution path, we can bypass the quantifier elimination step, often removing the dependence on the parameter dimension $d$, and, in some scenarios, recover the sample complexity rate that matches known lower bounds. We formalize this structural assumption as follows.

\begin{assumption} \label{asmp:explicit-solution-path}
    Given a problem instance $x$, we assume that $\alpha \mapsto \theta^*(x, \alpha) = \arg\min_{{\theta \in \Theta}} f(x, \alpha, \theta)$ is a piecewise rational function with complexity $(M_\textup{path}, T_\textup{path}, \Delta_\textup{path})$. Specifically:
    \begin{itemize}[leftmargin=*]
        \item There exists a set of at most $M_\textup{path}$ boundary rational functions $\mathbf{H}_{x} = \{h_{x, 1}, \dots, h_{x, M_\textup{path}}\}$ of $\alpha$.

        \item For any $\alpha \in \cA$, the set of minimizer $\arg \min_{\theta \in \Theta} f(x, \alpha, \theta) = \{\theta^*(x, \alpha)\}$ has an unique element. Moreover, the optimal parameter is given by $\theta^*(x, \alpha) = \theta_{\sigma(\alpha)}(\alpha)$, where $\sigma(\alpha) \in \Sigma_{{x}, \textup{path}} \subset \{-1, 0, 1\}^{M_\textup{path}}$ is the sign pattern of $\alpha$ with respect to the set of boundary $\mathbf{H}_{{x}, i}$, i.e., $\sigma(\alpha)_i = \sign(h_{{x}, i}(\alpha))$, and $\theta^*_{\sigma(\alpha)}(x, \alpha)$ is a rational function of $\alpha$. 

        \item The set of sign pattern $\Sigma_{x, \textup{path}}$ has at most $T_\textup{path}$ elements, i.e., $\abs{\Sigma_{x, \textup{path}}} \leq T_\textup{path}$. Besides, all rational functions $h_{{x}, i}$ and $\theta_{\sigma(\alpha)}(\alpha)$ are of degree at most $\Delta_\textup{path}$. 
    \end{itemize}
\end{assumption}

Assumption \ref{asmp:explicit-solution-path} implies the following: First, unlike the general setting where $\cS(x, \alpha)$ is implicitly defined via logical predicates, Assumption \ref{asmp:explicit-solution-path} guarantees an explicit function form $\theta^*(x, \alpha)$, allowing us to bypass the heavy machinery of quantifier elimination. 
Second, under Assumption \ref{asmp:explicit-solution-path}, the distinction between the case $f \equiv g$ and $f \not \equiv g$ effectively vanishes. Since $\theta^*(x, \alpha)$ is explicit, the loss becomes $\ell_\alpha(x) = g(x, \alpha, \theta^*(x, \alpha))$. Both settings reduce to analyzing a single composite piecewise rational function, eliminating the extra quantifier block typically required for validation constraints. 

Under a refined structure in Assumption \ref{asmp:explicit-solution-path}, we can establish improved generalization guarantees for learning the function classes $\cL$.

\begin{theorem}[Improved guarantee] \label{thm:explicit-solution-path-guarantee}
    Given a problem instance $x$, suppose Assumption \ref{asmp:explicit-solution-path} holds for the optimal parameters $\theta^*(\alpha, w)$, and the tuning objective $g_x(\alpha, \theta)$ (which is $f_x(\alpha, \theta)$ if $f \equiv g$ or $g_x(\alpha, \theta)$ if $f \not \equiv g$) admits a piecewise rational structure with complexity $(M_k, T_k, \Delta_k)$. Then $\Pdim(\cL) = \cO(p \log (M_\textup{total}\Delta_\textup{total}))$, where $M_\textup{total} = M_\textup{path} + T_\textup{path}\cdot (M_k + T_k)$ and $\Delta_\textup{total} = \Delta_k \cdot \Delta_\textup{path}$.
\end{theorem}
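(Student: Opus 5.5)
Under Assumption~\ref{asmp:explicit-solution-path} the loss is an explicit composite $\ell_\alpha(x) = g_x(\alpha, \theta^*(x,\alpha))$, so I will not use quantifier elimination. Instead I will show directly that, for fixed $(x,t)$, the predicate $\mathbb{I}(\ell_\alpha(x)\ge t)$ is a quantifier-free Boolean combination of sign conditions on a controlled family of polynomials in $\alpha$ alone. I will then bound the pseudo-dimension by a sign-pattern (Warren/Milnor-type) counting argument, i.e.\ the $K=0$ case of Theorem~\ref{thm:upper-bound-pdim}. With no quantified variables, that bound reads $O(p\log M + p^2\cdot 1\cdot\log\Delta)$. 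To obtain the sharper $O(p\log(M\Delta))$ I will instead count sign patterns of the polynomials over the $N$ instances jointly and invoke Warren's bound directly.

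\textbf{Step 1 (piecewise description).} Fix $x$. On each path cell $\{\alpha:\sigma(\alpha)=\sigma\}$ we have $\theta^*=\theta_\sigma(\alpha)$, a rational map. Substitute it into each boundary function $k_{j}(\alpha,\theta)$ of $g_x$ and each piece $g_{\tau}(\alpha,\theta)$. This produces rational functions of $\alpha$ of degree at most $\Delta_k\Delta_\textup{path}$. Then $\ell_\alpha(x)\ge t$ holds iff some path pattern $\sigma\in\Sigma_{x,\textup{path}}$ and some $g$-pattern $\tau$ satisfy: $\mathrm{sign}(h_{x,i}(\alpha))=\sigma_i$ for all $i$, $\mathrm{sign}(k_j(\alpha,\theta_\sigma(\alpha)))=\tau_j$ for all $j$, and $g_\tau(\alpha,\theta_\sigma(\alpha))-t\ge 0$. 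The rational functions involved are the $M_\textup{path}$ path boundaries together with, for each of the $T_\textup{path}$ patterns, the $M_k$ composed boundaries and $T_k$ composed pieces. That is at most $M_\textup{total}$ functions, each of degree at most $\Delta_\textup{total}$.

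\textbf{Step 2 (clearing denominators).} Write each rational $r=P/Q$ and use $\mathrm{sign}(r)=\mathrm{sign}(PQ)$ on $\{Q\ne0\}$. Points where $\theta_\sigma$ has a pole lie outside that cell, because $\theta^*$ is defined there, so these can be handled by adding the $Q$'s as extra atoms. This at most doubles the number of polynomials and the degree, which changes only constants. The output of $\ell$ is therefore determined by the sign vector of at most $O(M_\textup{total})$ polynomials in $\alpha$ of degree $O(\Delta_\textup{total})$.

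\textbf{Step 3 (counting).} Take $N$ instances with thresholds. The sign patterns $(\mathbb{I}(\ell_\alpha(x_n)\ge t_n))_n$ as $\alpha$ ranges over $\cA$ are determined by the sign vector of $O(N M_\textup{total})$ polynomials in $p$ variables of degree $O(\Delta_\textup{total})$. Warren's theorem bounds the number of such sign vectors by $(O(N M_\textup{total}\Delta_\textup{total}/p))^{p}$. Shattering requires $2^N$ patterns, which gives $N\le O(p\log(N M_\textup{total}\Delta_\textup{total}))$ and hence $N=O(p\log(M_\textup{total}\Delta_\textup{total}))$.

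\textbf{Main obstacle.} The delicate step is Step 2, together with making the composed boundary sets well-defined. Specifically, we must ensure that composing $g$'s piecewise structure with the rational path is legitimate where denominators vanish, and that the count $T_\textup{path}(M_k+T_k)$ of composed functions is uniform in $x$. I would handle the denominators first by restricting to each path cell, which is where the formula is valid, and including each denominator as an atom.
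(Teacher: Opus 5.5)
Your proposal is correct and follows the paper's proof. Both use the same three-stage case analysis: locate the path cell via the $M_\textup{path}$ boundaries, then evaluate the $T_\textup{path} M_k$ composed boundaries of the tuning objective, then compare the $T_\textup{path} T_k$ composed pieces with $t$, which gives the same predicate count $M_\textup{total}$ and degree $\Delta_\textup{total}$. The only difference is packaging: the paper encodes this as a GJ algorithm and invokes Theorem~\ref{thm:gj}, whereas you clear denominators and apply Warren's sign-pattern bound directly, which is exactly how that theorem is proved.
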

\noindent \textit{Proof sketch.}
    In this scenario, we can apply the GJ framework directly without invoking quantifier elimination. Besides, since $\theta^*(x, \alpha)$ becomes a rational function of $\alpha$, we expect that the final bound should only depend on the dimensionality of $\alpha$ instead of $\theta$. Recall that, to give an upper-bound for the pseudo-dimension of $\cL$ using GJ framework, for any problem instance $x \in \cX$ and and any real-valued threshold $t \in \bbR$, we want that the computation of $\mathbb{I}(\ell_x(\alpha) \geq t)$, where $\ell_\alpha(x) = k_x(\alpha, \theta^*(x, \alpha))$, can be described by a GJ algorithm (cf.~Definition \ref{def:GJ-algorithm}) with bounded complexities. Finally, we describe such computation, using the piecewise rational structure of $\theta^*(x, \alpha)$ and $k_x(\alpha, \theta)$, which gives us the final guarantees. See Appendix \ref{apx:explicit-solution-path} for the detailed proof. 
\qed

We note that under Assumption \ref{asmp:explicit-solution-path}, the bound given by Theorem \ref{asmp:explicit-solution-path} is \textit{tight in some scenarios}. Concretely, there exists a problem (e.g., data-driven tuning of the elastic-net regularization hyperparameters~\citep{balcan2023new}) such that the upper-bound for the pseudo-dimension from Theorem~\ref{thm:explicit-solution-path-guarantee} matches the lower-bound presented in \citet{balcan2023new}. See Appendix \ref{apx:recovering-elastic-net} for a detailed discussion.

\section{Applications in Learning Problems} \label{sec:applications}
In this section, we demonstrate applications of \Cref{thm:upper-bound-pdim} in two hyperparameter tuning problems: Weight Group Lasso, which violates the piecewise polynomial assumption of Assumption~\ref{asmp:structural}, and Weighted Fused Lasso, which has a better pseudo-dimension estimation.

\subsection{Data-driven Weighted Group Lasso - Beyond Piecewise Polynomial Assumptions}
\label{sec:weighted-group-lasso}
As discussed after \Cref{thm:upper-bound-pdim}, our machinery goes beyond piecewise polynomial structures. In this section, we demonstrate this with the weighted group Lasso sparse regularization \cite{yuan2006model}. In particular, we consider problem~\eqref{eq:hyper-parameters-tuning} with:
\begin{equation}
    \label{eq:group-lasso}
	\begin{aligned}
		f(x, \alpha, \theta) = & \|A\theta - b\|^2 + \overbrace{\sum_{i = 1}^p \alpha_i \|\theta_i\|_2}^{\text{weighted group LASSO}}, \\
		g(x, \alpha, \theta) = & \|A'\theta - b'\|^2,
	\end{aligned}
\end{equation}
where $x = (A, A', b, b')$, $\theta = (\theta_1, \ldots, \theta_p) \in \RR^d$ and $\alpha \in \RR^p$. No constraint is imposed on $\theta$, i.e., $\Theta = \RR^d$.

It is noteworthy that the use of the weighted group LASSO regularizer makes $f$ no longer piecewise polynomial (see Appendix~\ref{appendix:semi-algebraic}), and previous results are unable to handle this case. Nevertheless, \Cref{thm:upper-bound-pdim} still leads to an upper bound for the pseudo-dimension. 
\begin{theorem}[Pseudo-dimension for weighted group lasso]
    \label{thm:group-lasso}
	Consider the class of functions $\cL =\{\ell_\alpha\:\cX \to [-H,H] \mid \alpha \in \cA\}$ where $\ell_\alpha$ is defined as in \eqref{eq:hyper-parameters-tuning} and $f,g$ are defined as in \eqref{eq:group-lasso}. We have:
	\begin{equation*}
		\Pdim(\cL) = \cO(p^3d + p^2d^2).
	\end{equation*}
\end{theorem}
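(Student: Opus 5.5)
We will apply \Cref{thm:upper-bound-pdim}. The task is to write $\mathbb{I}(\ell_\alpha(x)\ge t)$ as a polynomial FOL with few quantifier blocks, a small number of atomic polynomials, and low degree. The obstacle is that $\|\theta_i\|_2$ is not polynomial. We remove it by adding auxiliary quantified variables $s=(s_1,\dots,s_p)$ for the first parameter $\theta$ and $s'=(s'_1,\dots,s'_p)$ for the competitor $\theta'$. The constraints
\[
s_i\ge 0,\qquad s_i^2=\|\theta_i\|_2^2
\]
force $s_i=\|\theta_i\|_2$, and these are polynomial of degree $2$. In lifted variables, $f$ becomes the polynomial $\|A\theta-b\|^2+\sum_i\alpha_i s_i$, which has degree $2$ in $(\alpha,\theta,s)$. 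Likewise $g$ is a degree-$2$ polynomial in $\theta$. Since $\Theta=\RR^d$, no domain check is needed. The assumption in the general setting that the argmin is nonempty should be stated or used as given; for $\alpha\ge 0$ coercivity is not automatic, so we take the minimizer set nonempty as in \Cref{asmp:structural}.

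Following the proof of \Cref{thm:pdim-tuning-validation}, the formula is
\[
\Phi_{x,t}(\alpha)=(\forall \theta,s)(\exists \theta',s')\Big[\neg\mathrm{Norm}(\theta,s)\ \lor\ g_x(\theta)\ge t\ \lor\ \big(\mathrm{Norm}(\theta',s')\land F(\alpha,\theta',s')<F(\alpha,\theta,s)\big)\Big].
\]
Here $\mathrm{Norm}(\theta,s)=\bigwedge_{i}(s_i\ge0\land s_i^2-\|\theta_i\|^2=0)$, and $F$ is the lifted training objective. Negating $\mathrm{Norm}$ is fine because it is a Boolean combination. We verify equivalence as follows. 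If $\mathrm{Norm}(\theta,s)$ holds, then $s$ is determined by $\theta$. The existential clause then says exactly that $\theta$ is not a minimizer of $f_x(\alpha,\cdot)$. So the formula states that every minimizer has $g\ge t$, which is $\ell_\alpha(x)\ge t$ under the optimistic infimum. One subtlety is that the infimum may not be attained on $\cS(x,\alpha)$. The minimizer set of this convex-in-$\theta$ problem (for $\alpha\ge0$) is closed, and $g$ is continuous. If the infimum is not attained, "$\forall$ minimizers $g\ge t$" still matches "$\inf\ge t$", so the equivalence holds regardless.

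Next we count the complexity. There are $K=2$ quantifier blocks, each of dimension $d_1=d_2=d+p$. The number of atomic polynomials is $M=O(p)$: two atoms per group for each of $s$ and $s'$, plus the $g$ and $F$ comparisons. The maximum degree is $\Delta=2$. \Cref{thm:upper-bound-pdim} then gives
\[
O\big(p(d+p+1)^2\log p+p^2(d+p)^2\log 2\big)=O\big(p^2(d+p)^2\big).
\]
This is $O(p^2d^2+p^3d+p^4)$. Since the groups partition $\theta$, we have $p\le d$, so $p^4\le p^3d$, and the $p(d+p)^2\log p$ term is also dominated. This yields $O(p^3d+p^2d^2)$.

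The main point to get right is the lifting. We must ensure that the auxiliary variables exactly encode the norms inside both quantifier blocks without changing the truth value. This is why $s$ is universally quantified together with $\theta$ and guarded by $\neg\mathrm{Norm}$, while $s'$ is existentially quantified together with $\theta'$ and asserted by $\mathrm{Norm}$. We must also check that the extra $p$ variables per block contribute only the $p^3d$-type terms. A sharper alternative would be to reuse a single block, but the two-block count above already matches the claimed bound.
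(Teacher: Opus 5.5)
Your proof is correct and follows the paper's own route: you lift each group norm with auxiliary variables constrained by $s_i\ge 0$ and $s_i^2=\|\theta_i\|_2^2$, write the two-block $\forall\exists$ formula from the validation-loss template with $M=4p+2$ atomic polynomials of degree $2$, and apply \Cref{thm:upper-bound-pdim}. The one difference is where the norm variables for $\theta$ go. The paper quantifies them existentially together with $z,\nu^z$, giving blocks of sizes $d$ and $d+2p$, and this yields $O(p^3d+p^2d^2)$ directly. Your universal placement with the $\neg\mathrm{Norm}$ guard is equally valid, but it produces an extra $p^4$ term, which you correctly absorb using $p\le d$.
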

\begin{proof}[Proof sketch]
    To use \Cref{thm:upper-bound-pdim}, we can add additional variables to represent $f$ with polynomials. Indeed, with extra scalar variables $\nu_1, \ldots, \nu_p$, we have:
    \begin{equation*}
		f(x, \alpha, \theta) = \|A\theta - b\|^2 + (\alpha_1 \nu_1 + \ldots \alpha_p\nu_p), 
	\end{equation*}
	with polynomial constraints:
	\begin{equation*}
		\nu_i^2 = \sum_{j} [\theta_i]_j^2 \quad \text{and} \quad \nu_i \geq 0 \quad  \forall i = 1, \ldots, p.
	\end{equation*}
    The detailed proof is presented in Appendix~\ref{appendix:applications}.
\end{proof}

\subsection{Data-driven Weighted Fused Lasso}

We consider the problem of signal denoising with structural change-point detection, commonly modeled using the \textit{Weighted Fused LASSO} \citep{tibshirani2005sparsity}. Given a noisy signal $b$, the goal is to recover an approximation $\theta \in \bbR^d$. While standard Fused LASSO uses simple scalar regularization parameters, practical applications often benefit from allowing spatially varying regularization weights $\alpha \in \bbR^{d - 1}$ to capture the heterogeneous noise levels or varying structural density across the signal. 

In particular, we consider Problem~\eqref{eq:hyper-parameters-tuning} with:
\begin{equation}
    \label{eq:fused-lasso}
    \begin{aligned}
        f(x, \alpha, \theta) = & \frac{1}{2} \|b - A\theta\|_2^2 + \sum_{i = 1}^{d - 1}\alpha_i\abs{\theta_{i + 1} - \theta_i},\\
		g(x, \alpha, \theta) = & \frac{1}{2}\|A'\theta - b'\|^2,
    \end{aligned}
\end{equation}
where $x = (A, A', b, b'), \alpha \in \RR^{p}$ with $p = d-1, \theta \in \RR^d$. No constraint is imposed, i.e., $\Theta = \RR^d$. In particular, we assume that the data distribution of $x$ satisfies that the matrix $A$ always has full column rank. We have the next result.

\begin{theorem}[Pseudo-dimension for weighted fused lasso]
    \label{thm:fused-lasso}
    Consider the class of functions $\cL =\{\ell_\alpha\:\cX \to [-H,H] \mid \alpha \in \cA\}$ where $\ell_\alpha$ is defined as in~\eqref{eq:hyper-parameters-tuning} and $f,g$ are defined as in~\eqref{eq:fused-lasso}. We have: 
    $$\Pdim(\cL) = \cO(d^2).$$
\end{theorem}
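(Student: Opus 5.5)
The plan is to exploit the fact that, because $A$ has full column rank, $f_x(\alpha,\cdot)$ is strongly convex for $\alpha \ge 0$. Hence the lower-level solution $\theta^*(x,\alpha)$ is unique. We will then show that it is an explicit piecewise rational function of $\alpha$, so that Theorem~\ref{thm:explicit-solution-path-guarantee} (or directly the GJ framework) applies with $p = d-1$. This avoids the $d^2$-type quantifier blocks of Theorem~\ref{thm:pdim-tuning-validation}, which would give a worse bound.

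\textbf{Step 1 (describing the solution by sign patterns).} Write $D \in \bbR^{(d-1)\times d}$ for the first-difference matrix, so $(D\theta)_i = \theta_{i+1}-\theta_i$. The KKT conditions read
\[
A^\top(A\theta - b) + D^\top \mathrm{diag}(\alpha)\, s = 0, \qquad s_i \in \partial|(D\theta)_i|.
\]
Fix a sign vector $\sigma \in \{-1,0,1\}^{d-1}$ prescribing the signs of $(D\theta)_i$. On the set $Z_\sigma = \{i : \sigma_i = 0\}$ we impose $(D\theta)_i = 0$. This restricts $\theta = U_\sigma \eta$, where $U_\sigma$ is a fixed $0/1$ matrix that merges fused coordinates into blocks. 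The stationarity condition then becomes the linear system
\[
U_\sigma^\top A^\top A U_\sigma \eta = U_\sigma^\top A^\top b - U_\sigma^\top D^\top_{\bar Z}\mathrm{diag}(\alpha)\sigma_{\bar Z}.
\]
The matrix $U_\sigma^\top A^\top A U_\sigma$ is invertible by full column rank, and it does not depend on $\alpha$. Therefore $\eta$, and hence $\theta_\sigma(\alpha)$, is an \emph{affine} function of $\alpha$, so the degree is $1$. The dual variables $s_i$ for $i \in Z_\sigma$ are likewise obtained affinely in $\alpha$, by solving $D_Z^\top \mathrm{diag}(\alpha_Z) s_Z = -A^\top(A\theta_\sigma - b) - D_{\bar Z}^\top \mathrm{diag}(\alpha)\sigma_{\bar Z}$. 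Since $D_Z$ has full row rank this system has a unique solution, with $\alpha_i s_i$ affine in $\alpha$.

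\textbf{Step 2 (the validation check as a GJ computation).} By uniqueness, $\theta^*(x,\alpha) = \theta_\sigma(\alpha)$ for the unique $\sigma$ whose candidate is feasible. Feasibility means two sets of conditions, each a polynomial sign condition of degree at most $2$ in $\alpha$:
\begin{enumerate}[leftmargin=*]
\item $\sign((D\theta_\sigma(\alpha))_i) = \sigma_i$ for $i \notin Z_\sigma$;
\item $|\alpha_i s_i| \le \alpha_i$ for $i \in Z_\sigma$.
\end{enumerate}
The predicate $\ell_\alpha(x) \ge t$ is then a Boolean combination over all $\sigma$ of these conditions together with the test $g(x,\alpha,\theta_\sigma(\alpha)) - t \ge 0$, which is quadratic in $\alpha$. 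This is a GJ algorithm, or equivalently a QFF in $\alpha \in \bbR^{d-1}$.

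\textbf{Step 3 (counting and the main obstacle).} The number of sign vectors is $3^{d-1}$, and each contributes $O(d)$ distinct polynomials of degree at most $2$. Thus the total number of polynomials is $M = d\,3^{d}$ and $\Delta = 2$. Applying the GJ/Warren bound, or Theorem~\ref{thm:upper-bound-pdim} with $K = 0$, gives
\[
\Pdim(\cL) = O(p\log M + p\log\Delta) = O\big((d-1)(d\log 3 + \log d)\big) = O(d^2).
\]
The main obstacle is justifying that the exponential enumeration over $\sigma$ is legitimate: we must check that the KKT feasibility tests exactly characterize the unique minimizer. We also need to handle the case $\alpha_i = 0$, where $s_i$ is undetermined. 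I would treat that case by stating the dual condition as $|(\text{affine residual})_i| \le \alpha_i$ directly in terms of the products $\alpha_i s_i$, which are affine, so no division by $\alpha_i$ occurs and the degree bound is preserved.
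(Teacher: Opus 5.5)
Your proof is correct and reaches the same $\cO(d^2)$ count, but by a different route from the paper.

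\textbf{The paper's route.} It passes to the dual, recognizes it as the multi-parametric QP $\min_u \frac12\|\tilde b-\tilde A u\|^2$ subject to $|u_i|\le\alpha_i$, and cites the mp-QP theorem of Bemporad et al.\ for piecewise affinity of $u^*(x,\alpha)$. Three states per box constraint give at most $3^{d-1}$ regions, and it then plugs $\Delta_{\textup{path}}=1$, $\Delta_k=2$ into Theorem~\ref{thm:explicit-solution-path-guarantee}.

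\textbf{Your route.} You stay in the primal. You enumerate sign patterns of $D\theta$, solve the reduced normal equations on $\operatorname{range}(U_\sigma)$ to get affine pieces $\theta_\sigma(\alpha)$, and use the KKT certificate as the region test. You then apply Theorem~\ref{thm:gj} directly.

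\textbf{Comparison.} The combinatorial core is the same. By stationarity in $z$ one has $u_i=\alpha_i\sigma_i$ when $\sigma_i\neq 0$, and your $w=\mathrm{diag}(\alpha_Z)s_Z$ is exactly the paper's dual variable $u$ restricted to $Z_\sigma$. So your three sign states per coordinate are the paper's three active-set states. What your route buys is self-containedness and explicitness:
\begin{itemize}
\item no strong duality, no $(A^\top A)^{-1/2}$, and no external parametric-programming theorem;
\item the region-membership predicates are written down, all affine in $\alpha$, whereas the paper leaves them to the citation and conflates the number of regions with $M_{\textup{path}}$;
\item the degenerate case $\alpha_i=0$ is handled cleanly by working with the products $w_i$;
\item the argument extends verbatim to any penalty $\sum_i\alpha_i|(D\theta)_i|$ with $D$ of full row rank.
\end{itemize}
The paper's route is shorter and more modular, reusing the general explicit-path theorem.

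\textbf{Two small points to tighten.}
\begin{itemize}
\item Full row rank of $D_Z$ gives only uniqueness of $w$. Existence follows because the reduced equations say exactly $U_\sigma^\top r(\alpha)=0$, and $\operatorname{range}(D_Z^\top)=\ker(D_Z)^\perp=\operatorname{range}(U_\sigma)^\perp$.
\item Your ``main obstacle'' closes quickly. Sufficiency of the test is the subdifferential sum rule for the convex $f_x(\alpha,\cdot)$ when $\alpha\ge 0$. For necessity, take $\sigma^*=\operatorname{sign}(D\theta^*)$: the true minimizer $\theta^*$ lies in $\operatorname{range}(U_{\sigma^*})$ and solves the uniquely solvable reduced system, so $\theta^*=\theta_{\sigma^*}(\alpha)$. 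Finally, any passing $\sigma$ satisfies $\operatorname{sign}(D\theta_\sigma(\alpha))=\sigma$ exactly, so at most one pattern passes. Hence the disjunction over $\sigma$ is an exact QFF for $\mathbb{I}(\ell_\alpha(x)\ge t)$.
\end{itemize}
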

The proof exploits \Cref{thm:explicit-solution-path-guarantee}; the details are in Appendix~\ref{appendix:applications}.

\section{Conclusion and Future Works}
Our paper established a general learning-theoretic framework for data-driven hyperparameter tuning with implicitly defined loss functions. By bridging statistical learning theory and model theory, we derived the first general sample-complexity guarantees for multi-dimensional hyperparameter tuning ($\alpha \in \mathbb{R}^p$), thereby resolving a key open question in the existing literature. Additionally, we establish the first lower bound construction for this general setting. Furthermore, we demonstrated that by leveraging additional algebraic structure and explicit solution paths, we can bypass the general worst-case analysis to achieve significantly tighter bounds. Finally, we illustrated the versatility of our framework through new applications in sparse learning and robust optimization.

Our work opens two primary questions for future research. First, the fundamental lower bounds for the general bi-level setting remain unknown. Although we established matching lower bounds for specific cases with explicit solution paths (Section \ref{sec:refined-structure-improved-bound}), determining the minimax lower bound for the general implicitly defined setting is an important open problem. Second, our current framework relies on semi-algebraic geometry. A natural extension is to generalize this logic-based approach to \textit{o-minimal structures} \citep{van1998tame}, such as Pfaffian functions. This would extend our guarantees to a much broader class of machine learning objectives involving transcendental functions like $\exp$, $\log$, and $\tanh$, moving beyond the polynomial limitations of the current work.

\section*{Acknowledgment}
Research reported in this paper was partially supported through the French ANR through the MIAI Cluster (reference ANR-23-IACL-0006). Viet Anh Nguyen gratefully acknowledges the support from the CUHK’s Improvement on Competitiveness in Hiring New Faculties Funding Scheme, UGC ECS Grant 24210924, and UGC GRF Grant 14208625.

\bibliography{references}
\bibliographystyle{icml2026}

\newpage
\appendix

\onecolumn

\section{The Goldberg-Jerrum (GJ) Framework} \label{apx:gj-framework}
The Goldberg-Jerrum (GJ) framework was originally proposed by \citet{goldberg1993bounding}, and later refined by \citet{bartlett2022generalization}. It establishes the pseudo-dimension upper-bound for parameterized function class $\cL$, of which the computation of each function $\ell_{\alpha}$ can be described by an \textit{GJ algorithm} using basic operators ($+, -, \times, \div$), conditional statements, and intermediate values which are typically rational functions (e.g., fractions of two polynomials) of $\alpha$. The formal definition of the GJ framework is as follows.

\begin{definition}[GJ algorithm, \cite{bartlett2022generalization}]
    \label{def:GJ-algorithm}
     A GJ algorithm $\Gamma$ operates on real-valued inputs, and can perform two types of operations:
    \begin{itemize}
        \item Arithmetic operators of the form $v'' = v \odot v'$, where $\odot \in \{ +, -, \times, \div\}$, and
        \item Conditional statements of the form ``if  $v \geq 0 \ldots$ else $\ldots$".
    \end{itemize}
    In both cases, $v$ and $v'$ are either inputs or values previously computed by the algorithm.
\end{definition}

The intermediate values $v, v', v''$ computed by the GJ algorithm $\Gamma$ can be considered as rational functions of its real-valued inputs $\alpha$. Based on its intermediate values, one can define the \textit{degree} and the \textit{predicate complexity}, which serve as complexity measures for the GJ algorithm.

\begin{definition}[Complexities of GJ algorithm, \cite{bartlett2022generalization}]
    \label{definition:gj-algorithm-complexities}
     The degree of a GJ algorithm is the maximum degree of any rational function that it computes of the inputs. The predicate complexity of a GJ algorithm is the number of distinct rational functions that appear in its conditional statements. Here, the degree of rational function $f(\alpha) = \frac{g(\alpha)}{h(\alpha)}$, where $g$ and $h$ are two polynomials in $\alpha$, is $\deg(f) = \max\{\deg(g), \deg(h)\}$.
\end{definition}

For a parameterized function class $\cL$ of which each function can be described by a GJ algorithm with bounded complexities, the following result establishes a concrete upper-bound for the pseudo-dimension $\Pdim(\cL)$.

\begin{theorem}[{\citet[Theorem~3.3]{bartlett2022generalization}}] \label{thm:gj}
    Suppose that each function $\ell_{\alpha} \in \cL$ is specified by $p$ real parameters $\alpha \in \bbR^p$. Suppose that for every problem instance $x \in \cX$ and real-valued threshold $t \in \bbR$, there is a GJ algorithm $\Gamma_{x, t}$ that, given $\ell_{\alpha} \in \cL$, returns ``true'' if $\ell_{\alpha}(x) \geq t$ and ``false'' otherwise. Assume that $\Gamma_{x, t}$ has degree $\Delta$ and predicate complexity $\Lambda$. Then, $\Pdim(\cL) = \cO(p\log(\Delta\Lambda))$.
\end{theorem}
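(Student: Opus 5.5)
The plan is to reduce shattering to counting sign patterns of a fixed, finite family of polynomials in the parameter $\alpha\in\bbR^p$, and then apply a Warren/Milnor-type bound on the number of such sign patterns. Suppose $\cL$ shatters $x_1,\dots,x_N$ with witnesses $t_1,\dots,t_N$, and for each $i$ let $\Gamma_i := \Gamma_{x_i,t_i}$ be the GJ algorithm from the hypothesis. We aim to show that $2^N$ is at most the number of distinct sign patterns of a family of $O(N\Lambda)$ polynomials of degree at most $\Delta$, and then solve for $N$.

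\textbf{Step 1 (each output is determined by finitely many signs).} Fix $i$. Every intermediate value of $\Gamma_i$ is a rational function of $\alpha$ of degree at most $\Delta$. By definition of predicate complexity, across \emph{all} execution paths only $\Lambda$ distinct rational functions $r_{i,1},\dots,r_{i,\Lambda}$ appear in conditional statements. We argue by induction on the steps of the execution that the control path, and hence the boolean output of $\Gamma_i$ at $\alpha$, is a function of the vector $(\sign r_{i,1}(\alpha),\dots,\sign r_{i,\Lambda}(\alpha))$. Two parameters $\alpha,\alpha'$ with the same sign vector take the same first branch, compute the same rational expressions, reach the same next test among the $r_{i,j}$, and so on until termination. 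Writing $r_{i,j}=g_{i,j}/h_{i,j}$, the sign of $r_{i,j}$ is determined by the signs of $g_{i,j}$ and $h_{i,j}$. We include the denominators, and also denominators of any divisions performed, to cover points where a quotient is undefined; this merely adds a constant factor to the count. So the output of $\Gamma_i$ depends only on the sign pattern of at most $2\Lambda$ polynomials of degree at most $\Delta$.

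\textbf{Step 2 (count).} Collecting the polynomials from Step 1 over $i=1,\dots,N$ gives a family of $m\le 2N\Lambda$ polynomials in $p$ variables of degree at most $\Delta$. The labelling vector $(\mathbb{I}(\ell_\alpha(x_i)\ge t_i))_{i\le N}$ is a function of the joint sign pattern of this family. By Warren's theorem, in the version that counts sign vectors in $\{-1,0,1\}^m$ (e.g.\ via perturbation, or the bound of \citet{goldberg1993bounding}), the number of such patterns is at most $(c\,\Delta m/p)^p$ whenever $m\ge p$. Shattering then forces
\[
2^N\le \left(\frac{2c\,\Delta\Lambda N}{p}\right)^p,
\]
that is, $N\le p\log_2(2c\Delta\Lambda N/p)$. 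The standard inequality $a\le b\log a \Rightarrow a=O(b\log b)$ then gives $N=O(p\log(\Delta\Lambda))$. The case $m<p$ is trivial, since then $N<p$.

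\textbf{Main obstacle.} The delicate step is Step 1. We must make sure that the set of tested functions is fixed independently of $\alpha$, which is exactly what predicate complexity guarantees, and we must handle sign vectors that contain zeros as well as undefined divisions. For the zeros, the plan is to use the $\{-1,0,1\}$ version of Warren's bound directly, at the cost of a constant factor inside the logarithm. For division by zero, the plan is to append the denominators to the polynomial family.
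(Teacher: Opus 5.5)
Your argument is correct and is essentially the standard proof of this result. The paper does not reprove it but imports it from \citet{bartlett2022generalization}, whose argument likewise pools the $N\Lambda$ tested rational functions, splits each into numerator and denominator to get $2N\Lambda$ polynomials of degree at most $\Delta$, and applies a Warren-type bound on sign conditions.

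One remark should be dropped: the claim that appending the denominators of \emph{every} division costs only a constant factor. The number of divisions is not controlled by $\Lambda$, so this would bring the running time into the bound. It is also unnecessary. Since $\Gamma_{x,t}$ returns an answer for every $\alpha$, each tested rational function has a reduced denominator that divides the denominator obtained by formally tracking the computation in $\mathbb{R}[\alpha]$, and that tracked denominator is nonzero wherever the computation is defined. So the $2N\Lambda$ numerators and denominators already determine every branch.
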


The GJ algorithm $\Gamma_{x, t}$ in Theorem~\ref{thm:gj} is determined for each specific problem instance $x$ and a real-valued threshold $t$. The input of $\Gamma_{x, t}$ is the hyperparameter $\alpha$ that parameterizes $\ell_{\alpha}$.

\section{Proofs for Section \ref{sec:learning-framework-via-FOL}} \label{apx:learning-framework-via-FOL}
In this section, we will present the formal proof for Theorem \ref{thm:upper-bound-pdim}. 

\begin{proof}[Proof of \Cref{thm:upper-bound-pdim}]
	By \Cref{thm:quantifier-elimination}, there exists an equivalent QFF $\Psi_{x, t}(\alpha)$ such that $\Phi_{x,t}(\alpha) \Leftrightarrow \Psi_{x, t}(\alpha)$. Moreover, $\Psi_{x, t}(\alpha)$ is a Boolean combination of $I$ atomic polynomial predicates in $\alpha$, with degree at most $\Delta_{QE}$, where
	\begin{equation*}
	    \begin{aligned}
	        I &\leq M^{\prod_{k = 1}^M (d_k + 1)} \cdot \Delta_f^{\cO(p)\prod_{k = 1}^M d_k}, \\
            \Delta_{QE} &\leq \Delta^{\cO(\prod_{k = 1}^M d_k)}.
	    \end{aligned}
	\end{equation*}
	We can construct a GJ algorithm $\Gamma_{x, t}$ to evaluate the formula $\Psi_{x, t}(\alpha)$ as follows:
	\begin{enumerate}[leftmargin=*]
		\item For each polynomial $P_j(\alpha)$ appears in $\Psi_{x, t}(\alpha)$, the algorithm $\Gamma_{x, t}(\alpha)$ computes its intermediate value $v_j = P_j(\alpha)$ using standard operators ($+, -, \times$). Since $P_j$ are polynomials in $\alpha$, this step is valid.
		
		\item For each predicate $P_j(\alpha) \chi_j 0$, the algorithm checks the condition (e.g., ``if $v_j \geq 0$'') using a conditional statement. 
		
		\item Finally, the algorithm $\Gamma_{x, t}(\alpha)$ combines the Boolean results of these checks according to the AND/OR structure of $\Psi_{x, t}$ to return the final truth.
	\end{enumerate}
	The degree of $\Gamma_{x, t}$ is simply the maximum degree of the intermediate polynomials computed, which is at most $\Delta_{QE}$. The predicate complexity is the number of distinct polynomials in the conditional statements, which is at most $ I$. 
	
	Finally, applying \Cref{thm:gj}, the pseudo-dimension of $\cL$ is upper-bounded by:
	\begin{equation*}
		\begin{aligned}
			\cO(p\log(I \cdot \Delta_{QE}))
            = &\cO\left(p \log M^{\prod_{k = 1}^M (d_k + 1)} \cdot \Delta^{2\cO(p) \prod_{k = 1}^M d_k}\right)\\
			= &O\left( p\prod_{k = 1}^M (d_k + 1) \log M + p^2 \prod_{k = 1}^M d_k \log \Delta\right)
		\end{aligned}
	\end{equation*}
	as desired.
\end{proof}

\section{Semi-algebraic Functions and Their Properties}
\label{appendix:semi-algebraic}
We start with the notion of semi-algebraic functions and sets:

\begin{definition}[Semi-algebraic sets and functions]
	\label{def:semi-algebraic}
	A subset $A$ of $\RR^n$ is called semi-algebraic if it can be described by a finite number of polynomial equalities and inequalities, i.e.,
	\begin{equation*}
		 A = \bigcup_{i \in \cI} \{x \mid P_i(x) = 0 \text{ and } Q_{i,j}(x) > 0, \forall j \in \cJ_i\},
	\end{equation*}
	where $\cI$ and $\cJ_i, i \in \cI$ are finite index sets. A function is semi-algebraic if and only if its graph is semi-algebraic.
\end{definition}

Semi-algebraic functions are stable under many operations.

\begin{proposition}[Properties of semi-algebraic functions {\citep[Theorem 2.84 and 2.85]{basu2006algorithms}}]
    The set of semi-algebraic functions is closed under composition, summation, and multiplication.
\end{proposition}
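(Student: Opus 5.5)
The statement to prove is that semi-algebraic functions are closed under composition, summation and multiplication. The plan is to reduce all three operations to one fact: images of semi-algebraic sets under coordinate projections are semi-algebraic (Tarski--Seidenberg). Equivalently, any set defined by a polynomial first-order formula is semi-algebraic. This is exactly the content of \Cref{thm:quantifier-elimination}, which turns a polynomial FOL into an equivalent QFF. A QFF is a Boolean combination of sign conditions, and it can be put into the disjunctive form of Definition~\ref{def:semi-algebraic}. To do so, expand $P\ge 0$ as $(P>0)\lor(P=0)$, $P\neq 0$ as $(P>0)\lor(-P>0)$, and $\neg(P>0)$ as $(-P>0)\lor(P=0)$. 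Finally, merge the several equalities in one conjunct into a single one via $\sum_i P_i^2=0$. The first step is therefore to record this lemma: \emph{a set defined by a polynomial FOL is semi-algebraic}.

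With the lemma in hand, each closure property comes from writing the graph of the new function as a polynomial FOL. For $f:\RR^n\to\RR^m$ and $g:\RR^m\to\RR^k$ semi-algebraic, the graph of $g\circ f$ is
\[
\{(x,z) \mid (\exists y\in\RR^m)\,[(x,y)\in \mathrm{Gr}(f) \land (y,z)\in \mathrm{Gr}(g)]\}.
\]
Each membership $(x,y)\in\mathrm{Gr}(f)$ is by definition a finite disjunction of conjunctions of polynomial sign conditions, so the whole expression is a polynomial FOL with one existential block. For scalar $f,g:\RR^n\to\RR$, the graph of $f+g$ is
\[
\{(x,z)\mid (\exists u,v\in\RR)\,[(x,u)\in\mathrm{Gr}(f)\land (x,v)\in\mathrm{Gr}(g)\land z-u-v=0]\},
\]
and the graph of $fg$ is the same set with $z-uv=0$ in place of $z-u-v=0$. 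Alternatively, sum and product follow from composition: $x\mapsto(f(x),g(x))$ has graph $\{(x,u,v)\mid (x,u)\in\mathrm{Gr}(f)\land (x,v)\in\mathrm{Gr}(g)\}$, which is semi-algebraic because intersections of semi-algebraic sets are, and it is then composed with the polynomial maps $(u,v)\mapsto u+v$ and $(u,v)\mapsto uv$. Polynomial maps are semi-algebraic since their graphs are the zero sets of $z-P(x)$. In both routes one should check that the domain is handled correctly: the existential quantifier over $y$ (or $u,v$) has a witness exactly when $x$ lies in the domain, and single-valuedness is inherited from $f$ and $g$.

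The main obstacle is the projection step, namely that existential quantification preserves semi-algebraicity. All other steps are routine Boolean bookkeeping. I will not reprove Tarski--Seidenberg; I will take it from \Cref{thm:quantifier-elimination} (equivalently \citet[Theorem~2.77]{basu2006algorithms}), noting only that \Cref{thm:quantifier-elimination} is stated for formulas with polynomial atoms and any fixed number of quantifier blocks, which covers the single $\exists$ block needed here. The remaining care point is normalizing the QFF into the exact $\bigcup_i\{P_i=0,\ Q_{i,j}>0\}$ shape of Definition~\ref{def:semi-algebraic}, which the sign-condition expansions in the first paragraph handle.
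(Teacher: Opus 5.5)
Your argument is correct and is essentially the standard proof. The paper gives no proof of its own and only cites Basu--Pollack--Roy, whose proofs likewise write the graphs of $g\circ f$, $f+g$ and $fg$ as projections of semi-algebraic sets and apply the Tarski--Seidenberg projection theorem. The one difference is that you obtain the projection step from the paper's quantitative quantifier-elimination theorem, which is heavier machinery than needed but valid.
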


It is noteworthy that the set of piecewise polynomial functions (cf. Definition~\ref{def:piecewise-poly-def}) is a strict subset of the set of semi-algebraic functions. Indeed, to show that a piecewise polynomial function $g$ is semi-algebraic, it is sufficient to express its graph as:
\begin{equation*}
	\mathtt{graph} \,g = \bigcup_{\sigma \in \Sigma_{f_x}}\left\{(x,y) \mid \left(\bigcap_{k = 1}^{M_f}\sign(h(x)) = \sigma_k\right) \cap (y = f_{\sigma}(x))\right\}.
\end{equation*}
Other examples of semi-algebraic functions in the learning context that are not piecewise polynomial are:
\begin{enumerate}[leftmargin=*]
    \item Norm $\ell_p, p \in \NN$: because its graph is given by:
    \begin{equation*}
        \{(x,y) \mid y > 0, \left(\sum_{i = 1}^{d} x_i^p\right) - y^p = 0\} \cup \{(0,0)\} \subseteq \RR^{d + 1}.
    \end{equation*}
    Note that $\ell_p$ is not piecewise polynomial because it is equal to $(\sum_{i=1}^p x_i^p)^{\frac{1}{p}}$.
    \item The $\ell_2/\ell_1$ ratio, i.e., $\|x\|_2/\|x\|_1$: Note that $f(x,y) = x/y$ is semi-algebraic since their graph is $\{(x,y) \mid xy = 1\}$. The $\ell_2/\ell_1$ ratio is, thus, also semi-algebraic because it is the composition of $g$ and $f$, where:
    \begin{equation*}
        g: \RR^d \to \RR^2, \qquad  x \mapsto \begin{pmatrix}
            \|x\|_2 \\ \|x\|_1
        \end{pmatrix}.
    \end{equation*}
    The $\ell_2/\ell_1$ ratio is not piecewise polynomial either because it is a rational function (and not polynomial).
    \item Group LASSO: Let $(\theta_1, \ldots, \theta_p) \in \RR^{d_1} \times \ldots \times \RR^{d_p}$ be a decomposition of $\theta \in \RR^d$, then the group LASSO is given by:
    \begin{equation*}
        f(\theta) = \sum_{i = 1}^p \|\theta_p\|_2.
    \end{equation*}
    As seen previously, $\|\cdot\|_2$ is semi-algebraic, and semi-algebraic functions are stable under summation; group LASSO is also semi-algebraic. It is not piecewise polynomial either, since it equals the sum of $\ell_2$ norms.
\end{enumerate}
\section{Proofs of \Cref{sec:tuning-training}}

\subsection{Upper bound}
\label{appendix:tuning-training}
\begin{proof}[Proof of \Cref{thm:pdim-tuning-training}]
    To apply Theorem \ref{thm:upper-bound-pdim}, given a problem instance $x$ and a real-valued threshold $t$, our goal is to construct a polynomial FOL formula $\Phi_{x, t}(\alpha)$ equivalent to $\mathbb{I}(\ell_\alpha(x) \geq t)$. Since $\ell_\alpha(x) = \min_{\theta \in \Theta} f_x(\alpha, \theta)$ is a minimization over $\Theta$, the condition $\ell_\alpha(x) \geq t$ is equivalent to stating that for all parameter $\theta \in \Theta$, the function value $f_x(\alpha, \theta)$ is greater or equal than $t$, and $\Phi_{x, t}(\alpha)$ is defined as
    \begin{equation*}
        \begin{aligned}
            \Phi_{x, t}(\alpha) &\triangleq (\forall \theta \in \bbR^d)[(\theta \in \Theta) \Rightarrow f_x(\alpha, \theta) \geq t] \\
            &= (\forall \theta \in \bbR^d)[\neg (\theta \in \Theta) \lor (f_x(\alpha, \theta) \geq t)].
        \end{aligned}
    \end{equation*}
    Here, we use the logical identity $(A \Rightarrow B) =  (\neg A \lor B)$ (i.e., not $A$ or $B$). The task now is to analyze the structural complexity of $\Phi_{x, t}(\alpha)$:
    \begin{itemize}[leftmargin=*]
        \item The formula involves exactly one block of quantifiers: $(\forall \theta \in \bbR^p)$. Thus, the number of quantifier alternations is $K = 1$, and the dimension of the quantified variables is $d_1 = d$.
        \item The formula involves two types of predicates:
        \begin{itemize}[leftmargin=*]
            \item Domain constraints:  because $\Theta = [\theta_\textup{min}, \theta_\textup{max}]^d$ is a box in $\bbR^d$, cheking the condition $\theta \in \Theta$ requires evaluating $2d$ linear inequalities (i.e., $\theta_j \geq \theta_{\min}$ and $\theta_j \leq \theta_\textup{max}$, for $j = 1, \dots, d$).
            \item Function structure: The condition $f_x(\alpha, \theta) \geq t$ relies on the piecewise polynomial structure of $f_x$ (cf.~Definition~\ref{def:piecewise-poly-def}). Formally, this condition holds if the pair $(\alpha, \theta)$ falls into a specific region indexed by a binary vector $\sigma = (\sigma_1, \dots, \sigma_{M_f}) \in \Sigma_{f_{x}}$, and the corresponding value polynomial $P_{x, \sigma}(\alpha, \theta)$ that $f_x$ admits in such region satisfies $P_{x, \sigma}(\alpha, \theta) \geq t$. We can express this logically as disjunctions over all valid sign patterns $\Sigma_f$:
            \[
                \bigvee_{\sigma \in \Sigma_f} \Big( \underbrace{\Big[ \bigwedge_{j=1}^{M_f} \text{sign}(h_{x, j}(\alpha, \theta)) = \sigma_j \Big]}_{\text{Region Check}} \land \underbrace{\left[ P_{x, \sigma}(\alpha, \theta) \ge t \right]}_{\text{Value Check}} \Big),
            \]
            where $h_{x, j}$ and $P_{x, \sigma}$ are boundary and piece polynomials in the piecewise polynomial structure of $f_x$.
        \end{itemize}
         Consequently, the set of atomic polynomials appearing in this formula consists of: (1) $M_f$ boundary polynomials $\{h_{x, 1}, \dots, h_{x, M_{f}}\}$, (2) at most $T_f$ polynomial $\{P_{x, \sigma} - t\}_{\sigma \in \Sigma_{f_x}}$. 
    \end{itemize}
    Therefore, the total number of distinct atomic predicates is bounded by $M_\textup{total} = M_f + T_f + 2d$, and the maximum degree is $\Delta_f$ (as linear constraints have degree 1). 
\end{proof}

\subsection{Lower-bound} \label{apx:lower-bound-proof}
    In this section, we will show that the factor $pd\log\Delta_f$ $\Pdim(\cL) = \Omega(pd \log \Delta_f)$ is unavoidable. Consequently, in many cases, if we treat the number of hyperparameters $p$ as a small constant, then the factor $\Theta(d\log \Delta_f)$ in Theorem \ref{thm:pdim-tuning-training} is \textit{tight}. The lower-bound construction is inspired by the \textit{bit-extraction technique}, but requires a non-trivial stabilization argument to handle the implicit optimization problem in the definition of the loss function class. To begin with, we recall a standard property of the \textit{coercive function}, which is helpful for the proof of the lower bound.
    
    \begin{lemma}[Extreme value theorem for coercive function] \label{lm:evt-coercive}
        If $P(\theta)$ is a continuous, coercive function ($P(\theta) \rightarrow \infty$ as $\|\theta\|_2 \rightarrow \infty$) on an unbounded, closed set, then $P(\theta)$ attains global minimum. 
    \end{lemma}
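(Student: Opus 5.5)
The statement is Lemma~\ref{lm:evt-coercive}: a continuous, coercive $P$ on an unbounded closed set $F \subseteq \bbR^d$ attains its global minimum on $F$. I would prove it by reducing to the classical Weierstrass extreme value theorem on a compact sublevel set.

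\textbf{Step 1 (choose a reference level).} Pick any $\theta_0 \in F$; the set is nonempty since it is unbounded. Set $c = P(\theta_0)$ and consider the sublevel set
\[
S = \{\theta \in F \mid P(\theta) \le c\}.
\]
This set is nonempty because it contains $\theta_0$.

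\textbf{Step 2 (compactness of $S$).} $S$ is closed, being the intersection of the closed set $F$ with the preimage $P^{-1}((-\infty, c])$, which is closed by continuity of $P$. For boundedness, coercivity gives a radius $R$ such that $\|\theta\|_2 > R$ implies $P(\theta) > c$. Hence $S$ lies in the ball of radius $R$. By Heine--Borel, $S$ is compact.

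\textbf{Step 3 (Weierstrass and comparison).} The continuous function $P$ attains a minimum on the compact nonempty set $S$, say at $\theta^\star$. Now take any $\theta \in F \setminus S$. Then $P(\theta) > c = P(\theta_0) \ge P(\theta^\star)$. So $\theta^\star$ is a global minimizer of $P$ over $F$.

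\textbf{Main obstacle.} There is no real obstacle. The only care needed is in Step 2: coercivity must be read as ``for every $c$ there is an $R$ with $P > c$ outside the ball of radius $R$.'' That reading is exactly the meaning of $P(\theta) \to \infty$ as $\|\theta\|_2 \to \infty$.

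In the later lower-bound argument, the lemma will be applied with $F = \bbR^d$ and $P = f(x, \alpha, \cdot)$. There, coercivity follows from the dominant term $C\sum_m \prod_k (\theta_m - k)^2$, which has even degree $2K$ in each coordinate. The remaining terms are nonnegative squares and the bounded-above sum $0.5\sum E_c$. Controlling the latter requires a degree comparison, since each $E_c$ has degree less than $2K$. That verification belongs to the application, not to the lemma.
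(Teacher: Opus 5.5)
Your proof is correct: the sublevel-set reduction to Weierstrass's theorem is the standard argument. The paper gives no proof of its own and simply recalls the lemma as a standard fact.

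One small correction to your closing remark, which lies outside the lemma. The paper's explicit use of the lemma is with $P = P_{\mathrm{grid}}$ on the closed unbounded set $\mathbb{R}^d \setminus \bigcup_{v \in \{0,\dots,K-1\}^d} \mathcal{B}(v,\delta)$, with open balls removed. That is why the closed-set version is needed, rather than $F = \mathbb{R}^d$ with $P = f$. Also, the bit-extractor term $E_b(\theta_i)$ is not bounded above or below; it is only dominated by the degree-$2K$ grid penalty, as your degree comparison indicates.
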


\begin{customthm}{\ref{thm:pdim-tuning-training-lower-bound} (restated)}
    Let $\cA = \bbR^p$, $\Theta = \bbR^d$. Then for every sufficiently large $\Delta_f > 0$, there exists a function class $\mathcal{L} = \{\ell_\alpha: \cX \rightarrow \bbR \mid \alpha \in \cA\}$, where $\ell_\alpha(x) = \min_{\theta \in \Theta} f(x, \alpha, \theta)$, and $f(x, \alpha, \theta)$ is a degree at most $\Delta_f$ for any problem instance $x \in \cX$, such that $\Pdim(\cL) = \Omega(pd\log \Delta_f)$.
\end{customthm}

\proof[Proof of Theorem~\ref{thm:pdim-tuning-training-lower-bound}]
    By definition, to show $\Pdim(\cL) = \Omega(pd\log \Delta_f)$, we have to show that there exists $N = \Omega(pd\log \Delta_f)$ problem instances $x_1, \dots, x_N \in \cX$ and $N$ real-valued threshold $\tau_1, \dots, \tau_N \in \bbR$ such that for any bit vector $y \in \{0, 1\}^{N}$, there exists a hyperparameter $\alpha_y \in \cA$ such that
    \[
        \bbI(\ell_{\alpha_y}(x_t) - \tau_t \geq 0) = y_t, \quad t = 1, \dots, N.
    \]
        In other words, the function class $\cL$ parameterized by $\cA$ can \textit{shatter} the set of problem instances $\{x_1, \dots, x_N\}$, with $\tau_1, \dots, \tau_n$ \textit{witnesses the shattering}. In the following, we will first construct the set of problem instances $\{x_1, \dots, x_N\}$, where $N = \Omega(pd\log \Delta_f)$, and then the objective $f(x, \alpha, \theta)$ which defines the function class $\cL$.

        \paragraph{The construction of problem instances $x_t$.} Let $K = \lfloor \Delta_f / 2\rfloor$, $B = \lfloor \log_2 K\rfloor$. Let $N = p \cdot d \cdot B$, then it is obvious that $N = \Omega(pd \log \Delta_f)$. For the triplet $(j, i, b)$, where
        \begin{itemize}
            \item $j \in \{1, \dots, p\}$ specifies the dimensionality of the parameter $\alpha = (\alpha_j)_{j = 1}^p \in \cA \subset \bbR^p$,
            \item $i \in \{1, \dots, d\}$ specifies the target base-$K$ digit of $\alpha_j$ (and the dimension of $\theta$), and
            \item $b \in \{1, \dots, B\}$ specifies the exact bit location to extract,
        \end{itemize}
        we define the problem instance $x^{(j, i, b)}$ as the tuple of \textit{one-hot} vectors $(u, v, w) \in \{0, 1\}^p \times \{0, 1\}^d \times \{0, 1\}^B = \cX$. Here, $u_j = 1, v_i = 1, w_b = 1$, and all other entries are 0; that is, for the problem instance $x^{(j, i, b)} = (u, v, w)$, we have
        \begin{equation*}
            \begin{aligned}
                u &= (0, \dots, 0, u_j = 1, 0\dots 0) \in \{0, 1\}^p, \\
                v &= (0, \dots, 0, v_i= 1, 0\dots 0) \in \{0, 1\}^d, \\
                w &= (0, \dots, 0, w_b = 1, 0\dots 0) \in \{0, 1\}^B.
            \end{aligned}
        \end{equation*}

    \paragraph{The construction of the objective $f(x, \alpha, \theta)$.} We define the training objective $f(x, \alpha, \theta)$ as follow:
    \[
        f(x, \alpha, \theta)  = \underbrace{C \sum_{m=1}^d \prod_{k=0}^{K-1} (\theta_m - k)^2}_{\text{1. Grid Penalty $C \cdot P_{grid}(\theta)$}} + \underbrace{\left( \sum_{n=1}^p u_n \alpha_n - \sum_{m=1}^d \theta_m K^{m-1} \right)^2}_{\text{2. Selector Penalty}} + \underbrace{0.5 \sum_{m=1}^d \sum_{c=1}^B v_m w_c E_c(\theta_m)}_{\text{3. Bit Extractor}}.
    \]
         Here, $C > 0$ is a sufficiently large positive constant (independent of the choice of binary vector $y$ as well as the index $y, j, i, b$), and $E_c(t)$, where $t \in \bbR$ is a bit-extraction polynomial
        \[
            E_c(t) = \sum_{j=0}^{K-1} \beta_{j, c} \left( \prod_{\substack{m=0 \\ m \neq j}}^{K-1} \frac{t - m}{j - m} \right),
        \]        
        and $\beta_{j, c} \in \{0, 1\}$ denotes the $c^{th}$-bit of the integer $j$ in its binary representation. Specificially, if $t \in \{0, 1, \dots, K - 1\}$, then $E_c(t)$ is the $c^{th}$ bit of $t$ in the binary form. 
        We then define $\ell_\alpha(x) = \min_{\theta \in \bbR^d}f(x, \alpha, \theta)$, and $\cL = \{\ell_\alpha: \cX \rightarrow \bbR \mid \alpha \in \bbR^p\}$. We will elaborate on the meaning of each term \textit{Grid Penalty, Selector Penalty, and Bit Extractor} in the proof of the following claims.

        We consider an additional function $\ell^\cK_\alpha(x) = \min_{\theta \in \cK^d} f(x, \alpha, \theta)$, where $\cK = \{0, \dots, K - 1\}$, is the value function restricted on the parameter grid $\cK \subsetneq \Theta$ instead of the whole parameter domain $\Theta$.

    We will now claim that: (1) for any $y$, we can construct $\alpha_y$ such that $\ell^\cK_{\alpha_y}(x^{(j, i, b)}) = \frac{y^{(j, i, b)}}{2}$, and (2) we can pick $C$ large enough such that $\ell_{\alpha_y}(x^{(j, i, b)}) \in (\ell^\cK_{\alpha_y}(x^{(j, i, b)}) - 0.1, \ell^\cK_{\alpha_y}(x^{(j, i, b)}))$.

    \paragraph{Claim 1: $\ell^\cK_{\alpha_y}(x^{(j, i, b)}) = \frac{y^{(j, i, b)}}{2}$.} For any $y \in \{0, 1\}^{p \times d \times B}$, let $\alpha_y = (\alpha_1, \dots, \alpha_p)$ as follows
    \begin{equation} \label{eq:alpha-construction}
        \alpha_j = \sum_{i=1}^d \underbrace{\left( \sum_{b=1}^B y_{j, i, b} 2^{b-1} \right)}_{D_{j,i}} K^{i-1}, \quad j = 1, \dots, p.
    \end{equation}
    
        Here, we can understand the term $D_{j, i} \in \{0, \dots, K - 1\}$ as the decimal form of the binary vector $y_{j, i}$ (uniquely encodes binary vector $(y_{j, i, 1}, \dots, y_{j, i, B})$). Therefore, $\alpha_j$ can be understood as an integer whose base-$K$ digits are exactly $D_{j, 1}, \dots, D_{j, d}$, and it is a unique way to encode all the bits of the binary matrix $y_j$, for $j = 1, \dots, p$, onto a single decimal value $\alpha_j$.

    Using such construction of $\alpha_y$, for any input problem instance $x^{(j, i, b)}$, the function $f(x, \alpha, \theta)$ becomes
    \[
        f(x^{(j, i, b)}, \alpha_y, \theta) = C \cdot P_{grid}(\theta) + \left(\sum_{m=1}^d \theta_m K^{m - 1} - \alpha_j\right)^2 + \frac{1}{2}E_b(\theta_i).
    \]
    We have the following observations:
    \begin{itemize}
        \item \textbf{The perfect key}: at $\theta^* = (D_{j, 1}, \dots D_{j, d}) \in \cK^d$, the grid penalty $C \cdot P_{grid}(\theta) = 0$, the selection penalty $\left(\sum_{m=1}^d \theta_m K^{m - 1} - \alpha_j\right)^2 = 0$. By the construction, $f(x^{(j, i, b)}, \alpha_y, \theta) = \frac{1}{2}E_b(D_{j, i})$, which is exactly the $b^{th}$ bit of $D_{j, i} = y_{j, i, b}$. So $f(x^{(j, i, b)}, \alpha_y, \theta^*) = \frac{y_{j, i, b}}{2}$.
        \item \textbf{Every other key is wrong:} At $\theta \in \cK^d \setminus \{\theta^*\}$, the grid penalty is $C \cdot P_{grid}(\theta) = 0$, the selector penalty $\left(\sum_{m=1}^d \theta_m K^{m - 1} - \alpha_j\right)^2 \geq 1$, and the bit extractor term $\frac{1}{2}E_b(\theta_i) \geq 0$. This means $f(x^{(j, i, b)}, \alpha_y, \theta^*) \leq f(x^{(j, i, b)}, \alpha_y, \theta)$.
    \end{itemize} 
    Hence, we claim that $\ell^\cK_{\alpha_y}(x^{(j, i, b)}) = \frac{y^{(j, i, b)}}{2}$.

    \paragraph{Claim 2: For $C$ large enough, $\ell_{\alpha_y}(x^{(j, i, b)}) \in (\ell^\cK_{\alpha_y}(x^{(j, i, b)}) - 0.1, \ell^\cK_{\alpha_y}(x^{(j, i, b)}))$.} When $y$, $x^{(j, i, b)}$, and $\alpha_y$ are fixed and defined as above, we abuse notation and let $f(\theta) \triangleq f(x^{(j, i, b)}, \alpha_y, \theta) = C\cdot P(\theta) + \psi^{(j, i, b)}_{y}(\theta)$, where
    \[
        P(\theta) = P_{grid}(\theta), \quad \psi^{(j, i, b)}_{y}(\theta) = \left(\sum_{m=1}^d \theta_m K^{m - 1} - \alpha_j\right)^2 + \frac{1}{2}E_b(\theta_i).
    \]
    
    \textbf{The upper bound:} At $\theta^*$, $P(\theta^*) = 0$, and therefore
    \[
        \ell_{\alpha_y}(x^{(j, i, b)}) = f(\theta_{cont}) \leq \psi^{(j, i, b)}_{y}(\theta^*) = \ell^\cK_{\alpha_y}(x^{(j, i, b)}), 
    \]
    where $\theta_{cont} \in \arg \min_{\theta \in \bbR^d}f(\theta)$.

        Next, we choose the value $C$ properly so that it could be independent of the binary vector $y$ and the indexes $(j, i, b)$.
    
        \textbf{The lower bound:} Note that $\psi^{(j, i, b)}_{y}(\theta)$ is a polynomial of $\theta$, then its derivative is bounded in $[-0.5, K]^d \supset \cK^d$. Therefore, it is $L$-Lipschitz continuous in $[-0.5, K]^d$, for some $L = L(j, i, b, y) > 0$. Denote $\delta = \min\{0.25, \frac{0.1}{L}\}$, we claim that any $\theta_{\textup{cont}}$ has to be close to $\theta^*$, i.e., $\|\theta^* - \theta_{\textup{cont}}\|_2 <\delta$. Assume that this is not the case, then $\theta_{\textup{cont}}$ must fall into one of the following categories: (1) there exists $v \in \cK^d \setminus \theta^*$ such that $\|\theta_{\textup{cont}} - v\|_2 < \delta$, and (2) there does not exist $v \in \cK^d$ such that $\|\theta_{cont} - v\|_2 < \delta$, i.e., $\theta \in \bbR^d \setminus \cup_{v \in \cK^d}\cB(v, \delta)$.
    
        For Case 1, since $\psi^{(j, i, b)}_{y}(\theta)$ is $L$-Lipschitz in $\cB(v, \delta) \subset [-0.5, K]^d$, we have 
        \[
            \psi^{(j, i, b)}_{y}(\theta) \geq \psi^{(j, i, b)}_{y}(v) - L\delta \geq \psi^{(j, i, b)}_{y}(\theta^*) + 0.5 - L\delta \geq \psi^{(j, i, b)}_{y}(\theta^*) + 0.4.
        \]
        This means that $f(\theta) = C\cdot P(\theta) + \psi^{(j, i, b)}_{y}(\theta) \geq \psi^{(j, i, b)}_{y}(\theta) \geq \psi^{(j, i, b)}_{y}(\theta^*) + 0.4$, which is a contradiction.
    
        For Case 2, since $P(\theta)$ is a continuous, coercive function, and $\cR = \bbR^d \setminus \cup_{v \in \cK^d}\cB(v, \delta)$ is a closed, unbounded set, from the Extreme value theorem (Lemma \ref{lm:evt-coercive}), there exists $\overline{\theta}^{(j, i, b)}_{y} \in \cR$ such that $P(\overline{\theta}^{(j, i, b)}_{y} ) = \min_{\theta \in \cR} P(\theta)$. We then define $\mu = \mu^{(j, i, b)}_{y} = P(\overline{\theta}^{(j, i, b)}_{y} )$. Note that $\mu > 0$ due to the squared form of $P(\theta)$, and it cannot achieve $0$ as $\nu \not \in \cK^d$. Now note that: (1) $P(\theta)$ is a polynomial of degree $2K$, (2) $\psi^{(j, i, b)}_{y}(\theta)$ is a polynomial of degree at most $K$. Therefore, there exists some sufficiently large threshold $T > 0$ such that for any $\theta$ such that $\|\theta\|_\infty \geq T$, we have $P(\theta) + \psi^{(j, i, b)}_{y}(\theta) \geq 1$. This will lead to the following cases:
        \begin{itemize}
            \item If $\theta_{cont} \in \cR \cap \{\theta: \|\theta\|_\infty \geq T\}$: then $f(\theta_{cont}) = C\cdot P(\theta_{cont}) + \psi^{(j, i, b)}_{y}(\theta_{cont}) \geq 1 > f(\theta^*)$ which is a contradiction.
            \item If $\theta_{cont} \in \cR \setminus \{\theta: \|\theta\|_\infty \geq T\}$: Since $\cR \setminus \{\theta: \|\theta\|_\infty \geq T\} \subset \{\theta: \|\theta\|_\infty < T\}$ is bounded, and $\psi^{(j, i, b)}_{y}(\theta)$ is a polynomial, there exists $M = M^{(j, i, b)}_y > 0$ such that $\psi^{(j, i, b)}_{y}(\theta) \geq -M$ for $\theta \in \cR \setminus \{\theta: \|\theta\|_\infty \geq T\}$. We then simply choose $C > \max_{j, i, b, y}\left(1 + \frac{\psi^{(j, i, b)}_{y}(\theta^*) +  M^{(j, i, b)}_y}{\mu^{(j, i, b)}_{y}}\right)$, and therefore
            \[
                f(\theta_{cont}) \geq C\mu - M > \psi^{(j, i, b)}_{y}(\theta^*) = f(\theta^*),
            \]
            which is a contradiction.
        \end{itemize}
    
        Therefore, it must be the case $\|\theta^* - \theta_{cont}\|_2 \leq \delta$, and therefore
        \[
            \psi^{(j, i, b)}_{y}(\theta_{cont}) \geq \psi^{(j, i, b)}_{y}(\theta^*) - L\|\theta_{cont} - \theta^*\| \geq \psi^{(j, i, b)}_{y}(\theta^*)  - L\delta \geq \psi^{(j, i, b)}_{y}(\theta^*) - 0.1
        \]
        where the final inequality comes from the fact that $\delta \leq \frac{0.1}{L}$. This implies:
        \[
            f(\theta_{cont}) \geq \psi^{(j, i, b)}_{y}(\theta_{cont}) \geq \psi^{(j, i, b)}_{y}(\theta^*) - 0.1 = f(\theta^*) - 0.1.
        \]

        From the two claims above, we have
        \[
            \ell_{\alpha_y}(x^{j, i, b}) \in \left[\frac{y^{(j, i ,b)}}{2} - 0.1, \frac{y^{(j, i ,b)}}{2} \right].
        \]
        Now, if we choose $\tau^{(j, i, b)} = 0.25$, then: (1) if $y^{(j, i, b)} = 1$, $\ell_{\alpha_y}(x^{j, i, b})  - \tau^{(j, i, b)} > 0$ or $\bbI(\ell_{\alpha_y}(x^{j, i, b})  - \tau^{(j, i, b)} > 0) =  y^{(j, i, b)}$, (2) $y^{(j, i, b)} = 0$,$\ell_{\alpha_y}(x^{j, i, b})  - \tau^{(j, i, b)} < 0$ or $\bbI(\ell_{\alpha_y}(x^{j, i, b})  - \tau^{(j, i, b)} > 0) =  y^{(j, i, b)}$. This concludes the proof.    
\qed

\section{Proofs of \Cref{sec:tuning-validation}}
\label{appendix:tuning-validation}
\begin{proof}[Proof of \Cref{thm:pdim-tuning-validation}]
    Similar to the proof of Theorem \ref{thm:pdim-tuning-training}, the idea is to use Theorem \ref{thm:upper-bound-pdim} by showing that: given a problem instance $x$ and a real-valued threshold $t$, there is a polynomial FOL $\Phi_{x, t}(\alpha)$ equivalent to $\mathbb{I}(\ell_x(\alpha) \geq t)$ with bounded complexities. Such a polynomial FOL can be defined as follows:
    \begin{equation*}
        \begin{aligned}
            \Phi_{x,t}(\alpha) &\triangleq (\forall \theta \in \mathbb{R}^d) \left[ (\theta \in \mathcal{S}(x, \alpha)) \Rightarrow (g(x, \alpha, \theta) \ge t)\right] \\
            &= (\forall \theta \in \mathbb{R}^d) \left[ \neg (\theta \in \mathcal{S}(x, \alpha)) \lor (g(x, \alpha, \theta) \ge t)\right].
        \end{aligned}
    \end{equation*}
    Here, we again use the identity $(A \Rightarrow B) = (\neg A \lor B)$. We first expand the optimality constraints $\theta \in \cS(x, \alpha)$. Note that a parameter $\theta$ is not optimal (i.e., $\neg (\theta \in \cS(x, \alpha))$ if it is not in the region $\Theta$ or if there exists a better candidate $\theta'$ such that $f_x(\alpha, \theta') < f_x(\alpha, \theta)$. Therefore, $\neg (\theta \in \cS(x, \alpha))$ can be rewritten as
    \[
        (\theta \not \in \Theta) \lor \left[(\exists \theta' \in \bbR^d)[(\theta' \in \Theta) \land (f_x(\alpha, \theta') < f_x(\alpha, \theta))]\right].
    \]
    Let $L_1 = (\theta' \in \Theta) \land (f_x(\alpha, \theta') < f_x(\alpha, \theta))$, which is the logical sentence for \textit{optimality check}, we can then write $\Phi_{x, t}(\alpha)$ as 
    \[
        (\forall \theta \in \bbR^d)(\exists \theta' \in \bbR^d)\left[\underbrace{\theta \not \in \Theta}_{\textup{Domain check}} \lor \underbrace{(g_x(\alpha, \theta) \geq t)}_{\textup{Validation check}} \lor L_1\right].
    \]
    We now analyze the structural complexity of $\Phi_{x, t}(\alpha)$:
    \begin{itemize}[leftmargin=*]
        \item The formula involves two blocks of quantifiers, each with dimension $d$. Therefore, $K = 2$, and the complexity scales with $\prod_{i = 1}^K(d_k + 1) = \cO(d^2)$.
        \item The atomic polynomial predicates required to express $\Phi_{x, t}(\alpha)$ include three components:
        \begin{itemize}[leftmargin=*]
            \item Domain constraints ($\theta' \not \in \Theta$ and $\theta \in \Theta$): since $\Theta = [\theta_\textup{min}, \theta_\textup{max}]^d$, checking those constraints take $\cO(d)$ atomic predicates of degree $1$.
            \item Validation check ($g_x(\alpha, \theta) \geq t$): this relies on the piecewise structure of $g_x$ (Definition \ref{def:piecewise-poly-def}). Concretely, this condition holds if the pair $(\alpha, \theta)$ falls into the specific region indexed by a binary vector  $\sigma = (\sigma_1, \dots, \sigma_{M_g}) \in \Sigma_{g_x}$, and the corresponding value polynomial $P_{g_x, \sigma}$ that $g_x$ admits in such region satisfies $P_{g_x,\sigma}(\alpha, \theta) \geq t$. We can express ($g_x(\alpha, \theta) \geq t$) logically as:
            \[
                  \bigvee_{\sigma \in \Sigma_{g_x}} \left( \left[ \bigwedge_{j=1}^{M_g} \text{sign}(h_{g_x,j}(\alpha, \theta)) = \sigma_j \right] \land [P_{g_x, \sigma}(\alpha, \theta) \geq t] \right).
            \]
            \item Optimization check ($f_x(\alpha, \theta) < f_x(\alpha, \theta)$): here, we compare the function $f_x$ evaluated at two points $(\alpha, \theta)$ and $(\alpha, \theta')$. To do so, we must determine the active regions (indexed by binary vectors $\sigma, \sigma' \in \Sigma_{f_x}$) for both points simultaneously. The condition can then be expressed as:
            \[
                \bigvee_{\sigma \in\Sigma_{f_x}}\bigvee_{\sigma' \in\Sigma_{f_x}}\left( \textup{Where}^{\boldsymbol{\sigma}}_{f_{x}}(\alpha, \theta) \land \textup{Where}^{\boldsymbol{\sigma}'}_{f_{x}}(\alpha, \theta') \land L\right),
            \]
            where $L = (P_{f_x, \sigma'}(\alpha, \theta') < P_{f_x, \sigma}(\alpha, \theta)$ is the value comparison term, and $\textup{Where}^{\boldsymbol{\sigma}}_{f_{x}}(\alpha, \theta)$ is a shorthand for the conjunction of signs of $M_f$ boundary polynomials evaluated at $(\alpha, \theta)$. The atomic predicates involved here are: (1) the boundary polynomials of $f_x$ evaluated at $\theta$: $\{h_{f_x, j}(\alpha, \theta)\}_{j = 1}^{M_f}$, the boundary polynomials of $f$ evaluated at $\theta'$: $\{h_{f_x, j}(\alpha, \theta')\}_{j = 1}^{M_f}$, and the pairwise difference of piece polynomials: $\{P_{f_x, \sigma}(\alpha, \theta) - P_{f_x, \sigma'}(\alpha, \theta')\}_{\sigma, \sigma'\in \Sigma_{f_x}}$.
        \end{itemize}
    \end{itemize}
    Summing these components, we conclude that the total number of distinct atomic polynomials $M_\textup{total}$ is bounded by
    \begin{equation*}
             M_\textup{total} \leq \underbrace{4d}_{\textup{Domain}} + \underbrace{(M_g + T_g)}_{\text{Validation}} +  \underbrace{(2M_f + T_f^2)}_{\textup{Optimization}} = \cO(d + M_g + T_g + M_f + T_f^2).
    \end{equation*}
    The maximum degree $\Delta_\textup{total}$ is defined by the highest degree among these polynomials, i.e., $\Delta_\textup{total} = \max(\Delta_f, \Delta_g)$. Substituting this into Theorem \ref{thm:upper-bound-pdim}, we have the postulated claim.
\end{proof}

\section{Proofs and Additional Results for Section \ref{sec:refined-structure-improved-bound} } \label{apx:explicit-solution-path}

\subsection{Proofs}
We now present the formal proof of Theorem \ref{thm:explicit-solution-path-guarantee}.

\proof[Proof of Theorem~\ref{thm:explicit-solution-path-guarantee}]
    In this scenario, we can apply the GJ framework directly without invoking quantifier elimination. Besides, since $\theta^*(x, \alpha)$ is now a rational function of $\alpha$, we expect that the final bound should only depend on the dimensionality of $\alpha$ instead of $\theta$. Recall that, to give an upper-bound for the pseudo-dimension of $\cL$ using GJ framework, for any problem instance $x \in \cX$ and and any real-valued threshold $t \in \bbR$, we want to show that the computation of $\mathbb{I}(\ell^*_x(\alpha) \geq t)$, where $\ell^*_x(\alpha)  \ell_\alpha(x) = k_x(\alpha, \theta^*(x, \alpha))$, can be described by a GJ algorithm (Definition \ref{def:GJ-algorithm}) with bounded complexities. 

    At the high level idea, the computation of $\mathbb{I}(\ell^*_x(\alpha) \geq t)$ can be divided into the following steps: (1) locating the form of $\theta^*(x, \alpha)$ as a rational function of $\alpha$ based on the relative position of $\alpha$, and (2) locating the polynomial form of the objective $h_x(\alpha, \theta)$, and (3) calculating $\mathbb{I}(\ell^*_x(\alpha) \geq t) = \mathbb{I}(k_x(\alpha, \theta^*(x, \alpha)) \geq t)$ with a GJ algorithm, which is valid since $k_x(\alpha, \theta^*(x, \alpha))$ is now a rational function of $\alpha$.

    First, to locate the form of $\theta^*(x, \alpha)$, based on its piecewise rational structure, we first calculate the vector $\sigma(\alpha)$, where $\sigma(\alpha)_i = \sign(h_{x, i}(\alpha))$. This requires at most $M_\textup{path}$ conditional statements, involving at most $M_\textup{path}$ distinct rational functions of $\alpha$. Second, for the rational form of $k_x(\alpha, \theta)$, again we leverage its piecewise rational structure, and calculate the vector $\kappa(\alpha, \theta^*(x, \alpha))$, where $\kappa(\alpha, \theta^*(x, \alpha))_i = \sign(h_{x, i, k}(\alpha, \theta^*(x, \alpha)))$. Here $h^k_{x, i}$ is the $i^{th}$ boundary rational functions of the objective function $k$ as in Definition \ref{def:piecewise-poly-def}. This requires at most $M_k \cdot T_\textup{path}$ conditional statements, where $M_k$ is the number of distinct forms of $h_{x, i, k}$ and $T_\textup{path}$ is the number of distinct forms of $\theta^*(x, \alpha)$. Finally, after the form of $k_x(\alpha, \theta^*(x, \alpha))$ is determined, which is a rational function of $\alpha$, we can now calculate $\mathbb{I}(\ell^*_x(\alpha) \geq t) = \mathbb{I}(k_x(\alpha, \theta^*(x, \alpha)) \geq t)$. The total distinct rational functions involved appearing in the conditional statements when calculating $\mathbb{I}(\ell^*_x(\alpha) \geq t)$ is at most $M_\textup{total} = M_\textup{path} + T_\textup{path}\cdot M_k + T_\textup{path} \cdot T_k = M_\textup{path} + T_\textup{path}(M_k + T_k)$. Here, the factor $T_\textup{path} \cdot T_k$ comes from the total forms that $k_x(\alpha, \theta^*(x, \alpha))$ can take. Finally, the maximum degree of the rational functions appearing in the conditional statements when calculating $\mathbb{I}(k_x(\alpha, \theta^*(x, \alpha)) \geq t)$ is at most $\Delta_\textup{total} = \Delta_k \cdot \Delta_\textup{path}$, due to the combination in $k_x(\alpha, \theta^*(x, \alpha))$. Applying Theorem~\ref{thm:gj} yields the final result.
\qed

\subsection{Data-driven ElasticNet} \label{apx:recovering-elastic-net}
In this section, we will use our general bound in Theorem \ref{thm:explicit-solution-path-guarantee} to recover the upper-bound for the problem of data-driven tuning ElasticNet across instances presented by \citet{balcan2023new}. Since \citet{balcan2023new} also presented a matching lower-bound, this shows that our general bound is tight for some problem. 

\textbf{Problem settings.} We first briefly introduce the problem of tuning regularization parameters for ElasticNet, previously considered by \citet{balcan2023new}. Concretely, consider a problem instance $x = (A, b, A_\textup{val}, b_\textup{val})$, where $(A, b) \in \bbR^{m \times d} \times \bbR^m$ representing a training set with $m$ samples and $d$ features, and $(A_\textup{val}, b_\textup{val}) \in \bbR^{m' \times d} \times \bbR^{m'}$ denotes the validation part of the problem instance $x$, we first consider the ElasticNet estimator $\theta(x, \alpha)$ defined as
\[
    \theta(x, \alpha) = \arg\min_{\theta \in \bbR^d} \frac{1}{2m} \|b - A\theta\|_2^2 + \alpha_1 \|\theta\|_1 + \alpha_2 \|\theta\|_2^2.
\]
Here, $\alpha = (\alpha_1, \alpha_2) \in \bbR^2_{> 0}$ denote the regularization hyperparameters controlling the magnitude of the $\ell_1$ and $\ell_2$ regularization. Then, the solution $\theta(x, \alpha)$ is evaluated in the validation set $(A_\textup{val}, b_\textup{val})$ of the problem instance $x$
\[  
    \ell_\alpha(x) = \frac{1}{2m'}\|b_\textup{val} - A_\textup{val} \theta(x, \alpha)\|_2^2. 
\]
Assuming that there is an application specific problem distribution $\cD$ over the space of problem instance $\cX = \bbR^{m \times d} \times \bbR^m \times \bbR^{m' \times d} \times \bbR^{m'}$, our goal is to answer the question of how many problem instances do we need to learn a good hyperparameter $\alpha$ for the problem distribution $\cD$. Denote $\cL = \{\ell_\alpha: \cX \rightarrow [-H, H] \mid \alpha \in \bbR^2_{>0}\}$, the previous question is equivalent to giving an upper-bound for the pseudo-dimension of $\cL$. 

\textbf{Recovering the pseudo-dimension upper-bound.}
We now demonstrate how to use our general result to establish an upper bound for $\Pdim(\cL)$. First, we will invoke the properties of the solution path of the ElasticNet, a rephrasing of the structural result mentioned in \citet{balcan2023new}.
\begin{proposition}[Theorem 3.2, \citet{balcan2023new}] \label{prop:elasticnet-structural-properties}
    For any fixed problem instance, the unique mapping $\alpha \rightarrow \theta^*(x, \alpha)$ satisfies Assumption \ref{asmp:explicit-solution-path} with 
    \begin{itemize}
        \item $T_\textup{path} = \cO(3^d)$: the domain $\bbR^2_{0}$ is partitioned into disjoint regions corresponding to the sign patterns (e.g., active sets) of the optimal coefficients. The number of regions is bounded by the total number of sign patterns, i.e., $T_\textup{path} \leq 3^d$.

        \item $M_\textup{path} = \cO(d3^d)$: the boundaries separating these regions are defined by conditions where a coefficient vanishes. The number of such boundary polynomials is bounded by $M_\textup{path} \leq d3^d$.

        \item $\Delta_\textup{path} = \cO(d)$: inside each region, the solution is given by a rational function (derived using Cramer's Rule on the active linear system). The degree of these rational functions is bounded by $\Delta_\textup{path} \leq 2d$.  
    \end{itemize}
\end{proposition}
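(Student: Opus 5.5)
The statement to prove is Proposition~\ref{prop:elasticnet-structural-properties}: for each fixed instance $x$, the ElasticNet solution map $\alpha \mapsto \theta^*(x,\alpha)$ satisfies Assumption~\ref{asmp:explicit-solution-path} with the stated complexities. I would argue directly from the KKT conditions of the strictly convex inner problem.

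\textbf{Uniqueness.} For $\alpha_2>0$ the objective
$\frac{1}{2m}\|b-A\theta\|_2^2+\alpha_1\|\theta\|_1+\alpha_2\|\theta\|_2^2$
is $2\alpha_2$-strongly convex and coercive. Hence the minimizer exists and is unique. This gives the singleton requirement of the assumption.

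\textbf{Structure on a fixed sign pattern.} Fix a sign vector $s\in\{-1,0,1\}^d$, and let $E=\{i: s_i\neq 0\}$ be the corresponding active set. Suppose $\theta^*$ has sign pattern $s$. The KKT conditions on $E$ read
\[
\tfrac{1}{m}A_E^\top(A_E\theta_E-b)+\alpha_1 s_E+2\alpha_2\theta_E=0 .
\]
The matrix $G_E(\alpha)=\tfrac1m A_E^\top A_E+2\alpha_2 I$ is invertible for $\alpha_2>0$. Therefore
\[
\theta_E = G_E(\alpha)^{-1}\big(\tfrac1m A_E^\top b-\alpha_1 s_E\big), \qquad \theta_{E^c}=0 .
\]
By Cramer's rule each coordinate is a ratio of determinants. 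The denominator $\det G_E(\alpha)$ is a polynomial in $\alpha$ of degree at most $|E|\le d$, and each numerator has degree at most $d$ as well. So each piece is rational with degree $\le 2d$ under any convention, which gives $\Delta_{\mathrm{path}}=\cO(d)$.

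\textbf{Region description.} The pattern $s$ is the true one exactly when the KKT conditions hold. These split into:
\begin{itemize}
\item sign consistency: $s_i\,\theta_i(\alpha)>0$ for $i\in E$ ($|E|$ rational functions);
\item subgradient conditions: $\big|\tfrac1m a_i^\top(A_E\theta_E(\alpha)-b)\big|\le\alpha_1$ for $i\notin E$ (two rational functions per inactive index, after clearing the positive denominator $\det G_E$).
\end{itemize}
Sufficiency of these conditions follows from convexity, and uniqueness makes the regions partition $\cA$.

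\textbf{Counting.} Collecting the boundary functions over all $3^d$ patterns gives at most $\cO(d)$ functions per pattern. Hence $M_{\mathrm{path}}=\cO(d3^d)$ and $T_{\mathrm{path}}\le 3^d$.

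\textbf{Main obstacle: matching Definition~\ref{def:piecewise-poly-def}.} The piece must be determined by the full sign vector of all $M_{\mathrm{path}}$ boundary functions, not just by the region's own conditions.
\begin{itemize}
\item If the sign vector of all boundaries at $\alpha$ is consistent with several patterns' defining systems, uniqueness of $\theta^*$ resolves this: exactly one pattern's KKT system holds at $\alpha$.
\item So the map from the global sign vector to a piece is well defined. Any sign vector that satisfies several patterns' systems cannot occur, and therefore contributes no extra pieces.
\item Boundary points (equalities) also need care. There the two adjacent formulas coincide by continuity of $\theta^*$, so either choice is valid.
\end{itemize}
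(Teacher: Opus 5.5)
Your argument is correct and is essentially the active-set/KKT plus Cramer's-rule argument that the paper takes from \citet{balcan2023new}; the paper gives only that sketch. Your check that uniqueness of $\theta^*$ makes the piece a well-defined function of the full boundary sign vector is more careful than the paper's summary. The one slack is the constant in $M_{\textup{path}}$: two one-sided subgradient functions per inactive index give up to $\tfrac{4}{3}d\,3^d$ boundaries. If instead you write $\tfrac1m a_i^\top(A_E\theta_E(\alpha)-b)=N_i(\alpha)/\det G_E(\alpha)$ and use the single polynomial $N_i(\alpha)^2-\alpha_1^2\det G_E(\alpha)^2$, you get exactly $d$ functions per pattern. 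This yields $M_{\textup{path}}\le d\,3^d$ with degree at most $2d$, matching the stated constants.
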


We are now ready to recover the upper-bound for the pseudo-dimension of $\cL$, which then implies the generalization guarantee for data-driven tuning of the regularization hyperparameters for ElasticNet, by combining Proposition \ref{prop:elasticnet-structural-properties} and Theorem \ref{thm:explicit-solution-path-guarantee}.

\begin{corollary} Let $\cL$ be the class of validation loss functions for ElasticNet as defined above. Then $\Pdim(\cL) = \cO(d)$.
\end{corollary}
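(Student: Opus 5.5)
The plan is to plug Proposition~\ref{prop:elasticnet-structural-properties} into Theorem~\ref{thm:explicit-solution-path-guarantee} with $p = 2$ and check that the logarithm collapses to $\cO(d)$. First, the validation objective $k_x(\alpha,\theta) = \frac{1}{2m'}\|b_\textup{val} - A_\textup{val}\theta\|_2^2$ does not depend on $\alpha$ and is a single quadratic polynomial in $\theta$. It therefore has piecewise rational structure with $M_k = 0$ boundary functions (or $M_k = 1$ with a trivial boundary, if one insists on a nonempty set), $T_k = 1$ piece, and degree $\Delta_k = 2$.

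Next I substitute the path complexities $T_\textup{path} = \cO(3^d)$, $M_\textup{path} = \cO(d3^d)$ and $\Delta_\textup{path} = \cO(d)$ into the quantities of Theorem~\ref{thm:explicit-solution-path-guarantee}. This gives
\[
M_\textup{total} = M_\textup{path} + T_\textup{path}(M_k+T_k) = \cO(d3^d) + \cO(3^d)\cdot \cO(1) = \cO(d 3^d),
\]
and $\Delta_\textup{total} = \Delta_k\Delta_\textup{path} = \cO(d)$. Hence
\[
\log(M_\textup{total}\Delta_\textup{total}) = \cO(d\log 3 + 2\log d) = \cO(d).
\]
Since $p=2$, the theorem yields $\Pdim(\cL) = \cO(2\cdot d) = \cO(d)$.

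The main point needing care is that the hypotheses of Theorem~\ref{thm:explicit-solution-path-guarantee} hold as stated. Uniqueness of the minimizer follows from strong convexity of the ElasticNet objective for $\alpha_2>0$. The hyperparameter domain $\bbR^2_{>0}$ is open rather than the box assumed earlier, but the GJ argument (Theorem~\ref{thm:gj}) never uses the box structure, so this is harmless. One also has to confirm that the degree of the composition $k_x(\alpha,\theta^*(x,\alpha))$ is controlled by $\Delta_k\Delta_\textup{path}$ when $\theta^*$ is rational. A quadratic in rational functions of degree $\cO(d)$ over a common denominator from Cramer's rule is again rational of degree $\cO(d)$, so I expect this to go through. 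Everything else is bookkeeping.
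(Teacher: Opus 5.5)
Your proposal is correct and follows the paper's proof essentially step for step: the paper also sets $p=2$, $M_k=0$, $T_k=1$, $\Delta_k=2$, obtains $M_\textup{total}\le (d+1)3^d$ and $\Delta_\textup{total}\le 4d$, and applies Theorem~\ref{thm:explicit-solution-path-guarantee} to get $\cO(p\log(M_\textup{total}\Delta_\textup{total}))=\cO(d)$. Your extra hypothesis checks are sound (uniqueness of the minimizer via strong convexity for $\alpha_2>0$, harmlessness of the open hyperparameter domain, and the common-denominator degree bound for the composition), and they make explicit what the paper leaves implicit.
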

\proof
    We first calculate the total complexities $M_\textup{total}, \Delta_\textup{total}$. First note that $p = 2$ (as $\alpha \in \bbR^2_{>0}$), and then we have:
    \begin{itemize}
        \item The piecewise rational structural complexities of the tuning objective $k_x(\alpha, \theta) = \frac{1}{2}\|b_\textup{val} - A_\textup{val}\theta\|_2^2$ are $M_k = 0$, $T_k = 1$, and $\Delta_k = 2$.

        \item Combining with Proposition \ref{prop:elasticnet-structural-properties}, the total complexities are
        \begin{itemize}
            \item  $M_\textup{total} = M_\textup{path} + T_\textup{path}(M_k + T_k) \leq (d + 1)3^d$, 
            \item $\Delta_\textup{total} = \Delta_k \cdot \Delta_\textup{path} \leq 4d$.
        \end{itemize}
    \end{itemize}

    Finally, applying Theorem \ref{thm:explicit-solution-path-guarantee} gives us
    \[
        \Pdim(\cL) = \cO(p\log(M_\textup{total} \Delta_\textup{total})) = \cO(d).
    \]
    This completes the proof.
\qed
\begin{remark}
    Note that by \citet[Theorem 3.5]{balcan2023new}, we have $\Pdim(\cL) = \Omega(d)$. Therefore, applying Theorem~\ref{asmp:explicit-solution-path} successfully recovers a tight bound for the pseudo-dimension of $\cL$.
\end{remark}

\section{Proofs of \Cref{sec:applications}}
\label{appendix:applications}

\subsection{Data-driven Weighted Group Lasso}
\begin{proof}[Proof of \Cref{thm:group-lasso}]
    The proof is similar to the previous. The only difference is how we describe $f$ using polynomials since the function is no longer piecewise polynomial. This is possible by adding extra $\nu_1, \ldots, \nu_p$ scalar variable and write:
	\begin{equation*}
		f(x, \alpha, \theta) = \|A\theta - b\|^2 + \sum_{i  = 1}^p \alpha_i \nu_i 
	\end{equation*}
	with polynomial constraints:
	\begin{equation}
		\label{eq:condition-l2}
		\nu_i^2 = \sum_{j} [\theta_i]_j^2 \quad \text{and} \quad \nu_i \geq 0, \forall i = 1, \ldots, p.
	\end{equation}
	The corresponding first-order formula is given by:
	\begin{equation*}
			\forall \theta \in \bbR^d, \exists (z, \nu^\theta,\nu^{z}) \in \bbR^{d+2p},(T_1 \lor T_2),
	\end{equation*}
	where $T_1$ and $T_2$ are:
	\begin{equation*}
		\begin{aligned}
			T_1 &= (\|A\theta - b\|^2 + \sum_{i} \alpha_i \nu_i^\theta >  \|Az - b\|^2 + \sum_{i} \alpha_i \nu_i^z) \land (\text{conditions } \eqref{eq:condition-l2} \text{ for } \nu^\theta \text{ and } \nu^z), \\
			T_2 &= \|A'\theta -  b\|^2 \geq t.
		\end{aligned}
	\end{equation*}
	Overall, we have $\Delta = 2$ and $M = 2(1 + 2p)$. Applying \Cref{thm:upper-bound-pdim}, we obtain:
	\begin{equation*}
		\Pdim(\cL) = \cO(p(d+1)(d + 2p + 1)\log (2 + 4p) + p^2(d+1)(d + 2p + 1)\log 2) = \cO(p^3d + p^2d^2). \qedhere
	\end{equation*}
\end{proof}

\subsection{Data-driven Weighted Fused Lasso}

We remind the problem setting and prove \Cref{thm:fused-lasso}.

To prove the results of the Weighted Fused LASSO, we first need to reformulate the optimization problem into a canonical form. Let $D \in \bbR^{(d - 1) \times d}$ be the first-difference matrix, i.e., 
\[
    D = \begin{bmatrix}
        -1 & 1 & 0 & \cdots & 0 & 0 \\
        0 & -1 & 1 & \cdots & 0 & 0 \\
        \vdots & \vdots & \vdots & \ddots & \vdots & \vdots \\
        0 & 0 & 0 & \cdots & -1 & 1
        \end{bmatrix}.
\]
The Weighted Fused LASSO estimation problem can be rewritten as
\[
    \min_{\theta \in \bbR^d} \frac{1}{2}\|b - A\theta\|_2^2 + \sum_{i = 1}^p\alpha_i\abs{(D\theta)_i},
\]
or equivalently, 
\[
    \min_{\theta \in \bbR^d} \frac{1}{2}\|b - A\theta\|_2^2 + \sum_{i = 1}^p\alpha_i\abs{z_i}, \textup{ s.t. } z = D\theta.
\]
First, we will show that the dual form of Weighted Fused LASSO takes the \textit{multi-parametric Quadratic Programming} (mp-QP) \citep{bemporad2002explicit}.
\begin{proposition}
    The dual formulation of Weighted Fused LASSO is
    \[
        \min_{u \in \bbR^p} \frac{1}{2} \|\tilde{b} - \tilde{A}u\|_2^2, \textup{ s.t. } \abs{u_i} \leq \alpha_i, i = 1, \dots, p,
    \]
    where $\tilde{A} = (A^\top A)^{-1/2}D^\top$ and $\tilde{y} = (A^\top A)^{-1/2}A^\top y$.
\end{proposition}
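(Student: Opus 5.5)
The plan is a direct Lagrangian duality computation for the equality-constrained reformulation
\[
\min_{\theta, z} \tfrac12\|b - A\theta\|_2^2 + \sum_{i=1}^p \alpha_i |z_i| \quad \text{s.t.} \quad z = D\theta,
\]
with $\alpha_i \ge 0$. I read the statement's $\tilde y$ as $\tilde b = (A^\top A)^{-1/2}A^\top b$, since it is the same object. The aim is to show that the dual function equals $\tfrac12\|b\|_2^2 - \tfrac12\|\tilde b - \tilde A u\|_2^2$ on the box $\{|u_i| \le \alpha_i\}$ and $-\infty$ outside it. Maximizing the dual is then the same as the stated minimization; the constant $\tfrac12\|b\|_2^2$ and the sign flip do not change the optimal $u$.

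\textbf{Step 1: form the Lagrangian.} Introduce a multiplier $u \in \bbR^p$ for $z = D\theta$ and write
\[
\mathcal{L}(\theta, z, u) = \tfrac12\|b - A\theta\|_2^2 + u^\top D\theta + \sum_i \big(\alpha_i|z_i| - u_i z_i\big).
\]
This splits into a $\theta$-part and a $z$-part, so the infimum can be taken separately in each.

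\textbf{Step 2: minimize over $z$.} The infimum separates coordinatewise. For each $i$, $\inf_{z_i} \alpha_i|z_i| - u_i z_i$ equals $0$ if $|u_i| \le \alpha_i$ and $-\infty$ otherwise. This is the conjugate of the weighted $\ell_1$ norm, and it produces exactly the box constraints $|u_i| \le \alpha_i$.

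\textbf{Step 3: minimize over $\theta$.} Since $A$ has full column rank, $A^\top A \succ 0$ and the $\theta$-part is strictly convex. Its stationarity condition $A^\top(A\theta - b) + D^\top u = 0$ gives
\[
\theta(u) = (A^\top A)^{-1}(A^\top b - D^\top u).
\]
Substituting back, the value is $\tfrac12\|b\|^2 - \tfrac12 (A^\top b - D^\top u)^\top (A^\top A)^{-1}(A^\top b - D^\top u)$. Factoring $(A^\top A)^{-1} = (A^\top A)^{-1/2}(A^\top A)^{-1/2}$, the quadratic form is $\|\tilde b - \tilde A u\|_2^2$ with $\tilde A = (A^\top A)^{-1/2}D^\top$. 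This gives the claimed dual function.

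\textbf{Step 4: strong duality and primal recovery.} The primal is convex with only affine equality constraints and is feasible, so the refined Slater condition holds. Hence the duality gap is zero and the dual optimum is attained. Moreover, the unique primal solution is recovered as $\theta^*(x,\alpha) = (A^\top A)^{-1}(A^\top b - D^\top u^*(\alpha))$. This recovery formula is what will be used afterwards. The dual is a multi-parametric QP in $u$ with parameter $\alpha$ appearing only in the right-hand sides of linear constraints, which links to the explicit solution-path structure of \citet{bemporad2002explicit} and Theorem~\ref{thm:explicit-solution-path-guarantee}.

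The main obstacle is not conceptual but bookkeeping. The sign convention of the multiplier must be tracked so that the constraint comes out as $|u_i| \le \alpha_i$. The additive constant and the max-to-min sign flip must be made explicit so that ``dual formulation'' is stated precisely as equivalence of optimizers. Full column rank of $A$ is essential, both for the inverse square root and for uniqueness of $\theta^*$.
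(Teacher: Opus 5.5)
Your proposal is correct and matches the paper's proof step for step: the same Lagrangian for the splitting $z = D\theta$, the same coordinatewise conjugate computation giving the box $|u_i| \le \alpha_i$, and the same stationarity formula $\theta(u) = (A^\top A)^{-1}(A^\top b - D^\top u)$. Your version is a bit more careful than the paper's in two places: you make explicit the additive constant $\tfrac12\|b\|_2^2$ and the max-to-min sign flip, which the paper drops silently when it writes the dual as a minimization, and you justify strong duality via the refined Slater condition rather than by appeal to ``linear-quadratic structure.''
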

\proof
    We first define the Lagrangian function $L(\theta, z, u)$ as 
    \[
        L(\theta, z, u) = \frac{1}{2}\|b - A\theta\|_2^2 + \sum_{i = 1}^p\alpha_i\abs{z_i} + u^\top(D\theta - z),
    \]
    where $u \in \bbR^p$ is a Lagrangian variables. Strong duality holds due to the convexity and linear-quadratic structure of the original problem. We now derive the dual objective function, which is the separable infimum of the Lagrangian function w.r.t.~the primal variables $\theta$ and $z$, i.e.,
    \begin{equation*}
        \begin{aligned}
            g(u) &= \inf_{\theta, z}L(\theta, z, u) \\
            &= \underbrace{\inf_{\theta}\left(\frac{1}{2}\|y - A\theta\|_2^2 + u^\top D\theta\right)}_{\textup{First term}} + \underbrace{\inf_{z}\left(\sum_{i = 1}^p(\alpha_i\abs{z_i} - u_iz_i)\right)}_{\textup{Second term}}.
            \end{aligned}
    \end{equation*}
    For the first term, we simply take the gradient w.r.t. $\theta$ and set it to zero, i.e.
    \[
        \nabla_\theta\left(\frac{1}{2}\|b - A\theta\|_2^2 + u^\top D\theta\right) = -A^\top(b - A\theta) + D^\top u = 0.
    \]
    If $A$ has full column rank (or in the \textit{general position}) \citep{tibshirani2011solution}, we conclude that the optimal $\theta^*(u)$ is
    \[
        \theta^*(u) = (A^\top A)^{-1}(A^\top b - D^\top u).
    \]
    For the second term, the infimum of $\alpha_{i}\abs{z_i} - u_iz_i$ is bounded only if $\abs{u_i} \leq \alpha_i$. If this holds, the infimum is $0$; otherwise, it is $-\infty$. This gives us the box constraints $\abs{u_i} \leq \alpha_i$. 
    Therefore, we conclude that the dual problem is
    \begin{equation*}
        \begin{array}{cl}
            \min_{u \in \bbR^p}  & \frac{1}{2}(A^\top y - D^\top u)^\top (A^\top A)^{-1}(A^\top b - D^\top u), \\
            \textup{ s.t. } & \abs{u_i} \leq \alpha, i = 1, \dots, p.
        \end{array}
    \end{equation*}
     Simply setting  $\tilde{A} = (A^\top A)^{-1/2}D^\top$ and $\tilde{b} = (A^\top A)^{-1/2}A^\top y$, we have the final conclusion.
\qed

Since the above take the form of mp-QP with linear constraints (i.e., $-\alpha \leq u_i \leq \alpha$), from Theorem 2 in \citet{bemporad2002explicit}, the solution $u^*(x, \alpha)$ is a \textit{piecewise affine} function of $\alpha$, where each piece corresponds to an active constraint. Using this property, we are now ready to prove the generalization guarantee for the problem of data-driven tuning regularization parameters for Weighted Fused LASSO in Theorem~\ref{thm:fused-lasso}.

\proof[Proof of Theorem~\ref{thm:fused-lasso}]
    To bound the pseudo-dimension, we use our proposed Theorem \ref{thm:explicit-solution-path-guarantee}. The complexity depends on the number of region $M_{\textup{path}}$ in the partition:
    \begin{itemize}[leftmargin=*]
        \item Each critical region corresponds to a stable set of active constraints. 
        \item For each of $p = d - 1$ dual variables, the box constraints allow for at most $3$ states at optimality: (1) $u_i = \alpha_i$, (2) $u_i = -\alpha_i$, or (3) $-\alpha_i < u_i < \alpha_i$.
        \item This means that the number of distinct regions is bounded by $M_\textup{path} \leq 3^p = 3^{d - 1}$.
    \end{itemize}
    Besides, $\Delta_\textup{path} = 1$ (affine) and $\Delta_\textup{k} = 2$ (quadratic of the validation $k(x, \alpha, \theta) = \frac{1}{2}\|b' - A'\theta\|_2^2$). Substituting into Theorem~\ref{thm:explicit-solution-path-guarantee}, we have the final conclusion.
\qed

\end{document}